\def\eqref#1{equation~\ref{#1}}
\def\1{\bm{1}}
\DeclareMathAlphabet{\mathsfit}{\encodingdefault}{\sfdefault}{m}{sl}
\SetMathAlphabet{\mathsfit}{bold}{\encodingdefault}{\sfdefault}{bx}{n}
\theoremstyle{plain}
\newtheorem{theorem}{Theorem}[section]
\newtheorem{lemma}[theorem]{Lemma}
\newtheorem{corollary}[theorem]{Corollary}
\theoremstyle{definition}
\newtheorem{definition}[theorem]{Definition}
\newtheorem{remark}[theorem]{Remark}
\title{Towards Demystifying the Generalization Behaviors When Neural Collapse Emerges}
\author{\textbf{Peifeng Gao$^{1}$, 
Qianqian Xu$^{2}$, 
Yibo Yang$^{5}$, 
Peisong Wen$^{1,2}$, 
Huiyang Shao$^{1, 2}$, 
Zhiyong Yang$^{1}$,
}
\\
\textbf{Bernard Ghanem$^{5}$, Qingming Huang$^{1,2,3,4}$} \\
$^{1}$ School of Computer Science and Tech., University of Chinese Academy of Sciences \\
$^{2}$ Key Lab of Intell. Info. Process., Inst. of Comput. Tech., CAS \\
$^{3}$ BDKM, University of Chinese Academy of Sciences,
$^{4}$ Peng Cheng Laboratory \\
$^{5}$ King Abdullah University of Science and Technology \\
\texttt{\{gaopeifeng21, shaohuiyang21\}@mails.ucas.ac.cn} \\
\texttt{\{xuqianqian, wenpeisong20z\}@ict.ac.cn, qmhuang@ucas.ac.cn,} \\
\texttt{yibo.yang93@gmail.com, bernard.ghanem@kaust.edu.sa}
}
\begin{document}

\maketitle

\sethlcolor{lightgray}

\begin{abstract}
  Neural Collapse (NC) is a well-known phenomenon of deep neural networks in the terminal phase of training (TPT). 
  It is characterized by the collapse of features and classifier into a symmetrical structure, known as simplex equiangular tight frame (ETF).
  While there have been extensive studies on optimization characteristics showing the global optimality of neural collapse, 
  little research has been done on the generalization behaviors during the occurrence of NC. Particularly, the important phenomenon of generalization improvement during TPT has been remaining in an empirical observation and lacking rigorous theoretical explanation. 
  In this paper, we establish the connection between the minimization of CE and a multi-class SVM during TPT, and then derive a multi-class margin generalization bound, 
  which provides a theoretical explanation for why continuing training can still lead to accuracy improvement on test set, even after 
  the train accuracy has reached 100\%. 
  Additionally, our further theoretical results indicate that 
  different alignment between labels and features in a simplex ETF can result in varying degrees 
  of generalization improvement, despite all models reaching NC and demonstrating similar optimization performance on train set. We refer to this newly discovered property as \emph{``non-conservative generalization''}.
  In experiments, we also provide empirical observations to verify the indications suggested by our theoretical results. 
\end{abstract}

\section{Introduction}\label{Introduction}

Deep learning models have achieved tremendous success across a wide range of applications. 
In the context of classification problems, a typical deep model comprises a deep feature extractor 
along with a linear classifier appended at the end of the extractor. 
Recently, \citet{PNAS2020} discovered the phenomenon of \textit{neural collapse} (NC) with respect to the last-layer extracted feature and the linear classifier. Some appealing properties with the collapsed within-class feature representation and the maximized equiangular separation among feature centers of different classes emerge after a model has achieved perfect classification on train set while the train loss is still above 0, known as the \emph{terminal phase of training} (TPT). During TPT, it is also observed that the model robustness and generalization on test set are also getting improved \citep{PNAS2020}.

The empirical observation of neural collapse has attracted a lot of following studies to unveil the physics behind such a phenomenon. Under a simplified model, it is shown that the optimization with the widely used loss functions, such as the cross-entropy (CE) loss and the mean squared error (MSE) loss, converge to neural collapse with weight decay regularizer \citep{zhu2021geometric,han2022neural,tirer2022extended} and feature constraints \citep{fang2021exploring, graf2021dissecting, lu2022neural, yaras2022neural}. Other than investigating the global optimizer, some studies also reveal the benign optimization landscape for the CE \citep{ji2022an,zhu2021geometric,yaras2022neural} and MSE \citep{zhou2022optimization} loss functions. All these studies try to theoretically explain the occurrence of neural collapse by characterizing the optimization on train set, however, leave the underlying generalization behaviors not yet understood. Whereas an improved test accuracy can be observed in TPT, there is very limited research conducted on the generalization analysis of a model exhibiting NC. \cite{galanti2021role} demonstrate how a classifier generalizes to unseen samples through the lens of NC. However, a theoretical support for the improved generalization during TPT in which NC emerges has not been rigorously established \citep{hui2022limitations,kothapalli2022neural}. To this end, in this paper, we study the following questions.


\textbf{\textit{
  Question 1:
  Why does the model still exhibits performance improvements on the test set if we continue to train it, 
  even after achieving perfect classification accuracy on all the samples in the training set?
}}

In Section~\ref{Generalization1}, we explore this question to look for the underlying mechanism that can theoretically elucidates the phenomenon. 
Our results indicate that the NC phenomenon 
comes from maximizing the minimal margin between any two classes, and we derive a generalization bound that becomes tighter as this margin increases. 
{Concretely, we establish a limit equivalence between the minimization of CE and the multi-class SVM. We prove that in the stage of TPT when CE loss tends to zero, which also means that the feature in the last layer starts to be linearly separable, the linear classifier behaves as a hard-margin multi-class SVM. We then propose a multi-class margin generalization bound based on the original binary classification margin bound, which can help to shed the light on the generalization behaviors during the occurence of NC.
}
When accuracy starts to become 100\% on the train set, \emph{i.e.}, the beginning of TPT, 
the different class centers have approximately formed a simplex equiangular tight frame (ETF). 
While selecting the class with the maximal logit can predict the correct category for any samples in train set at this moment, 
the feature learnt by model has not achieved a complete within-class variability collapse, as shown in Figure~\ref{Fig2}-(left), 
which means there is still potential such that the margins can be maximized if we continue to train the model. 
As the CE loss keeps being minimized, 
the support vectors (the closest features to the prediction boundary) are pushed away from the decision plane to be closer to its class center. 
Finally when neural collapse emerges, the last-layer feature is collapsed and the {minimal pair-wise margin} is maximized as shown in Figure~\ref{Fig2}-(right).
Based on our generalization bound, we know that the model with a larger margin enjoys a better generalization performance. This theoretically explains the generalization improvement during TPT, which has only been an important empirical observation in neural collapse \citep{PNAS2020}.


\begin{figure}[t]
  \centering
	\begin{minipage}{0.47\linewidth}
		\centerline{\includegraphics[width=0.7\textwidth]{./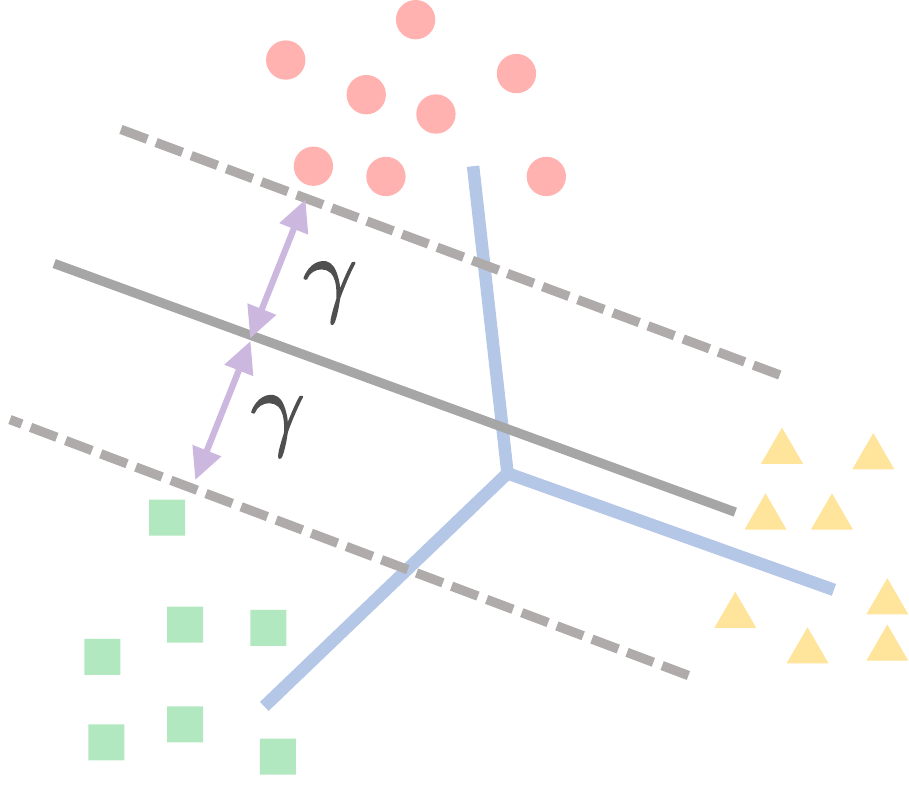}}
		\caption*{Left: at the beginning of TPT, every sample in the train set starts to be classified correctly, and
  the margin $\gamma$ between two classes (green and blue) still has potential to be maximized.}
		\label{Fig2a}
	\end{minipage}
 \quad
	\begin{minipage}{0.47\linewidth}
		\vspace{-2mm}
  \centerline{\includegraphics[width=0.7\textwidth]{./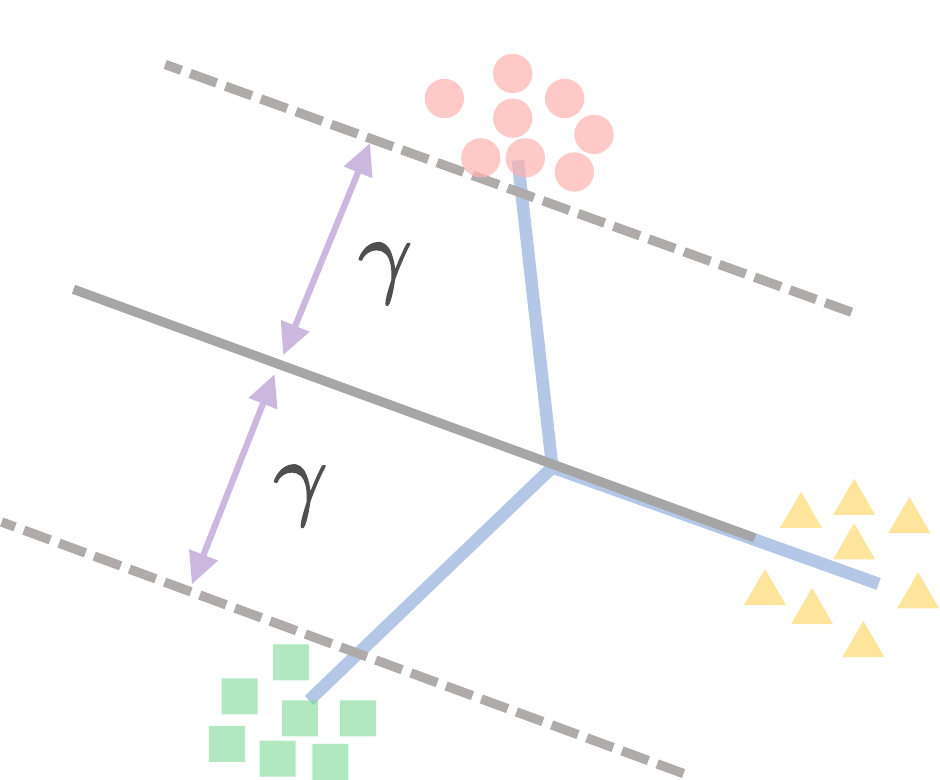}}
  
		\caption*{Right: during TPT, as the minimization of CE, 
  the margin $\gamma$ is gradually increasing until all features converge to their class centers.}
		\label{Fig2b}
	\end{minipage}
  \caption{These figures show the distribution of feature in the last layer during NC. 
  The classification problem involves $3$ classes, and the feature space is $2$-dimensional. 
  We use different color to represent the feature of different classes. 
  The black solid line indicates the decision boundary of binary classification 
  while the dashed lines indicate the supporting hyperplane of features.}
\label{Fig2}
\vspace{-4mm}
\end{figure}

Although training properties are invariant for a model exhibiting NC, 
many random factors in the modern training paradigm, such as weight initialization and data augmentation, can lead to different optimization path (gradient flow), as a result, the feature and classifier solution structure may converge to simplex ETFs with different directions and permutations. Clearly, the models converging to different simplex ETFs still enjoy the minimal CE and the same NC properties, however, there is no reason to believe that these models will also have the same generalization performance. Thus we are further curious about whether the generalization behaviors will be influenced by the variability of the converged solution structure.




\textit{\textbf{
  Question 2: 
  Would a model converging to different simplex ETFs still leads to the same generalization performance? 
  And if not, then why?
}}

In Section~\ref{FurtherExploration}, 
{
we first propose a series of generalization analysis offering the theoretical insight about how generalization could possibly be affected by representation learning with different equivalent simplex ETF (permutation and rotation equivalence, see Definition~\ref{EquivalentETF}). 
Our proof mainly draws inspiration from an analytical study on the 
supervised manifold learning for classification \citep{vural2017study}, 
which uses the out-of-sample interpolation technique.
However, due to the curse of dimensionality, interpolation in high-dimensional data space demands a large sample size, 
nearly unachievable in real-life situations. Thus we further develop an improved proof scheme, which requires less and weaker assumptions and leads to an error bound with faster convergence of sample size, to analyze the generalization in neural collapse.} Our results indicate that different permutations for the solution can cause changed margin and thus affect the generalization bound. 
{
We conduct extensive experiments in Section~\ref{Experiments_main} to provide empirical support for our theoretical findings considering both permutation and rotation. 
In experiments, we find that the models converging to simplex ETFs with different permutations and rotations, which would lead to different alignments between labels and features and different feature directions, have very different test accuracy, even if they have an accuracy of 100\% and an almost minimized CE loss on train set. 
}
All these theoretical and empirical results indicate that the generalization performance is sensitive to the solution variability. We refer to this intriguing generalization phenomenon as \emph{``non-conservative generalization''}, as different optimization paths, despite showing the same neural collapse properties at convergence, leads to different behaviors with respect to generalization.




In summary, 
the contributions of this study can be listed as follows:

\begin{itemize}
  \item We show that the minimization of CE loss can be seen as a multi-class SVM in the last layer during TPT, and derive a multi-class margin generalization bound, which theoretically explains the generalization improvement during the occurrence of neural collapse. 
  \item Inspired by out-of-sample interpolation technique, we conduct a generalization analysis, and develop an improved proof scheme, which requires less assumptions and leads to an error bound with faster convergence. These analysis indicate that the solution variability of a model converging to neural collapse can cause different generalization performance. 
  \item We conduct a series of experiments to provide solid empirical support to our theoretical results. Particularly, we verify the existence of non-conservative generalization in experiments that is suggested by our theoretical analysis. 
\end{itemize}

\section{Related Work} 

\textbf{Neural Collapse.} Neural Collapse (NC) first observed by \citet{PNAS2020} has inspired numerous studies to theoretically investigate such an appealing phenomenon.
Many studies \citep{ji2022an, fang2021exploring, zhu2021geometric, zhou2022all, NEURIPS2022_4b3cc0d1} 
have proposed various optimization models and proved the global optimality satisfying neural collapse. 
For example, \citet{zhu2021geometric} proposed the unconstrained feature model and provided optimization analysis for NC, and 
\citet{fang2021exploring} proposed the layer-peeled model, a nonconvex yet analytically tractable 
optimization model, to prove the NC optimality and predict the changed phenomenon in imbalanced training case. 
Other studies have explored NC under the mean squared error (MSE) loss 
\citep{han2022neural, tirer2022extended, zhou2022optimization, mixon2020neural, poggio2020explicit}. 
In addition to MSE loss, \citet{zhou2022all} extended such results and analysis
to a broad family of loss functions including the commonly used label smoothing and focal loss. 
However, these studies can only explain why NC appears on train set, 
leaving the generalization performance during the occurrence of NC unconsidered. 
The NC phenomenon has also inspired methods for application problems, 
such as imbalanced classification  \citep{Xie2022NeuralCI, NEURIPS2022_f7f5f501, pmlr-v202-peifeng23a,liu2023inducing} and 
incremental learning \citep{yang2023neural}. Nonetheless, the generalization behaviors of a model converging to NC have not been rigorously understood. 

\textbf{Generalization Analysis.} Generalization has been an important factor for deep neural networks \citep{belkin2019reconciling,soudry2018implicit,zhang2021understanding}, and has been studied through the lens of VC dimension \citep{vapnik2015uniform,neyshabur2017exploring} and the connection with algorithm stability or robustness \citep{bousquet2002stability,xu2012robustness}. 
There are some studies that focus on the transfer learning ability brought by NC \citep{hui2022limitations,kothapalli2022neural,li2022principled} via empirical investigations.
\citet{galanti2021role} provide a theoretical framework that shows that neural collapse helps to generalize to new samples and even unseen classes. 
However, a rigorous explanation for the generalization behaviors during the occurrence of neural collapse still remains to be explored.

\section{Preliminary}\label{Preliminary}

\cite{PNAS2020} conducted extensive experiments to reveal the NC phenomenon on class balanced datasets. 
This phenomenon occurs during the terminal phase of training (TPT), which starts from the epoch that the training accuracy has reached $100\%$.
During TPT, training error rate is zero, but CE loss still keeps decreasing. 
To describe this phenomenon more clearly, we introduce several necessary notations first. 
We denote the class number as $C$ and feature dimension as $d$. 
Here, we consider classifiers with the form 
$logit = \bm{M} \bm{z} = [\langle M_1, \bm{z} \rangle, \dots, \langle M_C, \bm{z} \rangle]^{T}$,
where $\bm{M} \in \mathbb{R}^{d \times C}$ is the linear classifier, and $\bm{z}$ is the feature of a sample obtained from a deep feature extractor. 
The classification result is given by selecting the maximum score of $logit$. 
Given a balanced dataset, we denote the feature of $i$-th sample in $y$-th category as $\bm{z}_{y, i}$. 
Specifically, when the model is trained on a balanced dataset, 
its last layer would converge to the following manifestations:
\begin{itemize}
  \item[\textbf{NC1}] \textbf{Variability Collapse.} All samples belonging to the same class converge to the class mean: $\Vert \bm{z}_{y, i} - \bar{\bm{z}}_{y} \Vert\rightarrow 0, \forall y, \forall i$
  where $\bar{\bm{z}}_{y} = \text{Ave}_{i}\left(\bm{z}_{y, i}\right)$ denote the class-center of the $y$-th class;
  \item[\textbf{NC2}] \textbf{Convergence to Self Duality.} The samples and classifier belonging to the same class converge to duality: $\Vert \bm{z}_{y, i} - M_{y} \Vert\rightarrow 0, \forall y, \forall i$;
  \item[\textbf{NC3}] \textbf{Convergence to Simplex ETF.} The classifier weight converges to the vertices of a simplex ETF;
  \item[\textbf{NC4}] \textbf{Nearest Class-Center Classification.} The class prediction can be performed by the distance to the nearest class center, $i.e. \arg \max_{y} \langle M_y, z \rangle \rightarrow \arg \min_{y} \Vert z - \bar{\bm{z}}_{y} \Vert$.
\end{itemize}
In \textbf{\textbf{NC3}}, Simplex ETF is an interesting structure. 
Note that there exist two different objects with this notion: simplex ETF and ETF. 
ETF is rooted from frame theory, 
while \cite{PNAS2020} introduced the simplex ETF 
in the context of the NC phenomenon with the following definition.

\begin{definition}[\textbf{Simplex Equiangular Tight Frame} \cite{PNAS2020}]\label{ETF}
  A simplex ETF is a collection of vertices in $\mathbb{R}^{C}$ specified by the columns of 
  \begin{equation*}
  \begin{aligned}
    \bm{M}^{\star} = \alpha R 
    \sqrt{\frac{C}{C-1}}
    \left( I - \frac{1}{C} \mathbb{I} \mathbb{I}^{T} \right),
  \end{aligned}
  \end{equation*}
  where $I \in \mathbb{R}^{C \times C}$ is the identity matrix, $\mathbb{I} \in \mathbb{R}^{C}$ is the all-one vector,
  $R \in \mathbb{R}^{d \times C} (d \ge C)$ is an orthogonal projection matrix, $\alpha \in \mathbb{R}$ is a scale factor.
\end{definition}
The simplex ETF has a symmetric structure. One can find that the angles between any two vertices in a simplex ETF are equal (the \textit{equiangular} property). 

\section{Theoretical Results}

\textbf{Notations.} 
Denote feature dimension of the last layer as $d$ and class number as $C$. 
Suppose sample space is $\mathcal{X} \times \mathcal{Y}$, where 
$\mathcal{X}$ is data space and $\mathcal{Y} = \{1, \dots, C\}$ is label space. 
We assume class distribution is $\mathcal{P}_{\mathcal{Y}} = \left[ p(1), \dots, p(C) \right]$, 
where $p(c)$ denotes the proportion of class $c$. 
Let the training set $S = \{(\bm{x}_i, y_i)\}_{i=1}^{N}$ be drawn i.i.d from 
probability $\mathcal{P}_{\mathcal{X} \times \mathcal{Y}}$. 
For $y$-th class samples in $S$, we denote $S_y = \{\bm{x} |(\bm{x}, y) \in S\}$ and $|S_y| = N_y$. 
The form of classifiers  is $logit = \bm{M}^{T} f(\bm{x};\bm{w}) = [\langle M_1, f(\bm{x};\bm{w}) \rangle, \dots, \langle M_C, f(\bm{x};\bm{w}) \rangle]$, 
where $\bm{M} \in \mathbb{R}^{d \times C}$ is the last-layer linear classifier, and $f(\cdot; \bm{w}) \in \mathbb{R}^{d}$ is the feature extractor parameterized by $\bm{w}$.

\subsection{Generalization improvement when Neural Collapse Emerges}\label{Generalization1}

This section mainly answers the \textbf{\textit{Question 1}}. 
As we discussed in Section \ref{Introduction}, we argue that the NC phenomenon is closely linked with the margin of classification.
Therefore, our analysis focuses on margin and is two-fold. 
First, we find that the minimization of CE during TPT can actually be seen as a multi-class SVM. 
Then, we focus on the margin-based generalization analysis, which could explain the intriguing generalization behavior during NC. 

First, we illustrate the landscape in the last layer during TPT by analyzing the unconstrained feature model (UFM).

\textbf{Unconstrained Feature Model.}
Following \cite{zhu2021geometric}, we directly optimize sample features to simplify the analysis. 
It is reasonable because modern deep models are highly over-parameterized to be able to align a feature with any direction. 
We use $\bm{z}_{y,i}$ to represent the feature of the $i$-th sample in the $y$-th class. 
\begin{equation}\label{opt0}
  \begin{aligned}
    \min_{\bm{Z}, \bm{M}} \mathrm{CELoss}(\bm{M}, \bm{Z}) := - 
    \sum_{y=1}^{C} \sum_{i=1}^{N_y} \log 
    \frac{\exp \big(\langle M_{y}, \bm{z}_{y, i} \rangle \big)}
    {\sum_{y'} \exp \big(\langle M_{y'}, \bm{z}_{y, i} \rangle\big)}.
\end{aligned}
\end{equation}
\begin{restatable}[\textbf{Multiclass SVM}]{theorem}{MulticlassSVM}\label{Convergence}
  For the CE loss function (\ref{opt0}), consider a path of gradient flow $\{(\bm{M}^{(t)}, \bm{Z}^{(t)})\}_{t}$.
  If $\mathrm{CELoss}(\bm{M}^{(t)}, \bm{Z}^{(t)}) \rightarrow 0$ $(t \rightarrow \infty)$, 
  then $p_{min} \rightarrow \infty$ $(t \rightarrow \infty)$, where 
  $p_{min}$ is the margin of the train set
  $$
    p_{min} := \min_{y \neq y'} \min_{i \in [N/C]} \langle M_y  - M_{y'}, z_{y, i} \rangle.
  $$
\end{restatable}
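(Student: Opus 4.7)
The plan is to extract the conclusion directly from what it means for the cross-entropy to vanish, without invoking any specific dynamics of the gradient flow: the statement only uses the hypothesis $\mathrm{CELoss}(\bm{M}^{(t)}, \bm{Z}^{(t)}) \to 0$, so gradient flow is incidental here. The key is to rewrite the per-sample loss in a form where margins appear explicitly and then argue termwise.

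First I would split the total loss into per-sample contributions. Each summand
\begin{equation*}
\ell_{y,i}(\bm{M},\bm{Z}) \;=\; -\log \frac{\exp(\langle M_y, \bm{z}_{y,i}\rangle)}{\sum_{y'} \exp(\langle M_{y'}, \bm{z}_{y,i}\rangle)} \;=\; \log\!\left(1 + \sum_{y'\neq y} \exp\!\big(-\langle M_y - M_{y'},\, \bm{z}_{y,i}\rangle\big)\right)
\end{equation*}
is non-negative, and the training set is finite. Hence $\mathrm{CELoss}\to 0$ forces every single $\ell_{y,i}^{(t)} \to 0$, which in turn forces
\begin{equation*}
\sum_{y'\neq y} \exp\!\big(-\langle M_y^{(t)} - M_{y'}^{(t)},\, \bm{z}_{y,i}^{(t)}\rangle\big) \;\longrightarrow\; 0.
\end{equation*}

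Next, because every exponential in that sum is non-negative, each term individually must tend to $0$. Taking logarithms, I obtain, for every ordered triple $(y,y',i)$ with $y\neq y'$,
\begin{equation*}
\langle M_y^{(t)} - M_{y'}^{(t)},\, \bm{z}_{y,i}^{(t)}\rangle \;\longrightarrow\; +\infty \qquad (t\to\infty).
\end{equation*}
Since $p_{\min}^{(t)} = \min_{y\neq y'}\min_{i}\langle M_y^{(t)} - M_{y'}^{(t)}, \bm{z}_{y,i}^{(t)}\rangle$ is the minimum of finitely many quantities each diverging to $+\infty$, we conclude $p_{\min}^{(t)}\to\infty$, which is exactly the claim.

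There is no serious obstacle: the only thing to be careful about is the passage from "sum of non-negative terms vanishes" to "every term vanishes", which is justified by finiteness of the training set, and the implicit observation that the individual loss $\ell_{y,i}$ is bounded below by $\log(1+\exp(-\langle M_y-M_{y'},\bm{z}_{y,i}\rangle))$ for any fixed $y'\neq y$, so vanishing of $\ell_{y,i}$ already forces each pairwise margin to diverge. The interpretation offered in the theorem's name ("Multiclass SVM") is then that driving CE to zero behaves, in the limit, as a hard-margin classifier maximizing the worst pairwise margin.
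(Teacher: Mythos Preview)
Your argument is correct. The chain is airtight: the cross-entropy is a finite sum of nonnegative terms, so vanishing of the sum forces each $\ell_{y,i}^{(t)}\to 0$; the identity $\ell_{y,i}=\log\big(1+\sum_{y'\neq y}\exp(-\langle M_y-M_{y'},\bm{z}_{y,i}\rangle)\big)$ then forces the inner sum of nonnegative exponentials to vanish, hence each pairwise margin diverges, and the minimum of finitely many diverging sequences diverges.

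The paper takes a different route. Rather than arguing termwise, it sandwiches the whole loss between two functions of $p_{\min}$ alone,
\[
\log\big(1+e^{-p_{\min}}\big)\;\le\;\mathrm{CELoss}(\bm{M},\bm{Z})\;\le\;N\log\big(1+(C-1)e^{-p_{\min}}\big),
\]
introduces the reparametrizations $\ell_a(p)=e^{-\phi_a(p)}$ with $\phi_a(p)=-\log\log(1+ae^{-p})$ and their inverses $\Phi_a$, and then uses the mean value theorem to show $p_{\min}-\Delta(t)\le\Phi_1\big(-\log\mathrm{CELoss}\big)\le p_{\min}$ with $\Delta(t)=\mathcal{O}(1)$. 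Divergence of $p_{\min}$ follows from monotonicity of $\Phi_1(-\log(\cdot))$ and $\mathrm{CELoss}\to 0$. Your proof is strictly more elementary and fully sufficient for the theorem as stated; the paper's two-sided bound buys a sharper quantitative link, namely $p_{\min}=\Phi_1(-\log\mathrm{CELoss})+\mathcal{O}(1)$, which supports the ``CE minimization $\approx$ multiclass SVM'' narrative beyond the bare limit statement. If all you need is $p_{\min}\to\infty$, your route is the cleaner one.
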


\begin{remark}
  In fact, consider the problem $\max_{\bm{M}, \bm{Z}} p_{min}$. If we transform the inner $\min$ as the constraint, we can obtain the following multiclass SVM
  \begin{align*}
    & \ \ \ \ \ \ \ \ \ \ \ \ \ \ \ \ \max_{\bm{M}, \bm{Z}} \gamma 
    \\
    & \begin{array}{r@{\quad}l@{}l@{\quad}l}
      s.t.   \langle M_y  - M_{y'}, z_{y, i} \rangle \ge \gamma ,\forall y \neq y', \forall i.
    \end{array}
  \end{align*}
  This shows that during TPT, the linear classifier works as a potential multi-class SVM under the effect of CE. 
  And the margin of the entire train set is still improving as the CE loss keeps decreasing, 
  which we also demonstrate empirically in Figure~\ref{Margin}.
\end{remark}

Then, we provide the definition of margin between two classes when features in the last layer are linearly separable:
\begin{definition}[\textbf{Linear Separability}]
  Given the dataset $S$ and a classifier $(\bm{M}, f(\cdot; \bm{w}))$, 
  if the classifier can achieve $100\%$ accuracy on train set, 
  $\bm{M}$ has to be able to linearly separate for the feature of $S$, \emph{i.e.},
  for any two classes $y, y' (y \neq y')$, there exists a $\gamma_{y,y'} > 0$ such that:
  \begin{equation*}
  \begin{aligned}
    & (M_{y} - M_{y'})^{T} f(\bm{x}; \bm{w}) \ge \gamma_{y,y'}, &\forall (\bm{x}, y) \in S, \\
    & (M_{y} - M_{y'})^{T} f(\bm{x}; \bm{w}) \le -\gamma_{y,y'}, & \forall (\bm{x}, y') \in S.
  \end{aligned}
  \end{equation*}
  In this case, we say the classifier $\bm{M}$ can linearly separate the features 
  $\{(f(\bm{x}; \bm{w}), y) | (\bm{x}, y) \in S \}$ by margin $\{\gamma_{y,y'}\}_{y \neq y'}$. 
\end{definition}
It is reasonable to make an assumption that features in the last layer are linearly separable, 
since this exactly is the scene where NC begins to appear. Then we propose the Multiclass Margin Bound. 
\begin{restatable}[\textbf{Multiclass Margin Bound}]{theorem}{maintheoremone}\label{MulticlassMarginBound}
  Consider a dataset $S$ with $C$ classes. 
  For any classifier $(\bm{M}, f(\cdot; \bm{w}))$, 
  we denote its margin between $y$ and $y'$ classes as $(M_{y} - M_{y'})^{T} f(\cdot; \bm{w})$.
  And suppose the function space of the margin is $\mathcal{F} = \{ (M_{y} - M_{y'})^{T} f(\cdot; \bm{w}) | \forall y \neq y', \forall \bm{M}, \bm{w}\}$, 
  whose uppder bound is 
  \begin{equation*}
  \begin{aligned}
    \sup_{y \neq y'} \sup_{\bm{M}, \bm{w}} \sup_{x \in \mathcal{X}} \left| (M_y - M_{y'})^{T} f(\bm{x};\bm{w}) \right| \le K.
  \end{aligned}
  \end{equation*}
  Then, for any classifier $(\bm{M}, f(\cdot; \bm{w}))$ with margins $\{\gamma_{y,y'}\}_{y \neq y'} (\gamma_{y,y'} > 0)$ on the given dataset, 
  the following inequality holds with probability at least $1 - \delta$,
  \begin{equation*}
    \begin{aligned}
      \mathbb{P}_{x,y}\Big(\max_{c} [M f(\bm{x};\bm{w})]_{c} \neq y\Big) 
      \lesssim  & 
      \underbrace{
        \sum_{y=1}^{C} p(y) \sum_{y' \neq y} \frac{\mathfrak{R}_{N_{y}}(\mathcal{F})}{\gamma_{y,y'}}
      }_{(\textbf{A})}
      + 
      \underbrace{
        \sum_{y=1}^{C} p(y) \sum_{y' \neq y} \sqrt{\frac{\log(\log_{2} \frac{4K}{\gamma_{y,y'}})}{N_y}} 
      }_{(\textbf{B})}
      + L_{0,1} 
    \end{aligned}
    \end{equation*}
    where $\lesssim$ means we omit probability related terms, and $L_{0,1}$ denotes the empirical risk term:
    \begin{equation*}
    \begin{aligned}
      L_{0,1} = 
      \sum_{y=1}^{C} p(y) \sum_{y' \neq y}
      \sum_{x \in S_y} \frac{\mathbb{I}((M_y - M_{y'})^{T}f(x) \le \gamma_{y,y'})}{N_y}.
    \end{aligned}
    \end{equation*} 
    $\mathfrak{R}_{N_i}(\mathcal{F})$ is the Rademacher complexity \citep{marginBound, Rademacher} of function space $\mathcal{F}$. 
    Please refer to Appendix \ref{ProofofTheorem45} for full version of this theorem. 
\end{restatable}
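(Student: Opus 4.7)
The plan is to reduce the multiclass bound to a collection of per-pair binary margin bounds, handle the data-dependent margin by a peeling argument, and then aggregate with a union bound over ordered class pairs weighted by the class prior.

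First I would rewrite the misclassification probability via a one-vs-one decomposition. If $\arg\max_c [\bm{M}f(\bm{x};\bm{w})]_c \neq y$, then some $y'\neq y$ must satisfy $h_{y,y'}(\bm{x}) := (M_y - M_{y'})^{T} f(\bm{x};\bm{w}) \le 0$. A union bound and conditioning on the label give
\[
\mathbb{P}_{\bm{x},y}\!\Big(\max_{c}[\bm{M}f(\bm{x};\bm{w})]_c \neq y\Big) \;\le\; \sum_{y=1}^{C} p(y) \sum_{y'\neq y} \mathbb{P}\!\left(h_{y,y'}(X) \le 0 \,\Big|\, Y=y\right).
\]
This already matches the outer sum structure in the statement, and each inner probability only depends on $X\mid Y=y$, so it can be estimated from the $N_y$ i.i.d.\ samples in $S_y$.

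Next, for each ordered pair $(y,y')$ I would apply a standard one-dimensional Rademacher margin bound to the function class $\mathcal{F}$ on the sub-sample $S_y$. The idea is to sandwich $\mathbb{I}(h_{y,y'}\le 0)$ between the $\gamma$-ramp surrogate $\phi_\gamma(t)=\min\{1,\max\{0,1-t/\gamma\}\}$ below and the indicator $\mathbb{I}(h_{y,y'}\le \gamma)$ above. Symmetrization together with Talagrand's contraction lemma (the ramp is $1/\gamma$-Lipschitz and $\mathcal{F}$ is $K$-bounded) yields, for any fixed $\gamma>0$, with probability at least $1-\delta'$,
\[
\mathbb{P}(h_{y,y'}\le 0\mid Y=y) \;\le\; \frac{1}{N_y}\sum_{\bm{x}\in S_y}\mathbb{I}(h_{y,y'}(\bm{x})\le\gamma) \;+\; \frac{2\,\mathfrak{R}_{N_y}(\mathcal{F})}{\gamma} \;+\; \sqrt{\tfrac{\log(1/\delta')}{2N_y}}.
\]

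The main obstacle is that $\gamma_{y,y'}$ in the statement is data-dependent, so the fixed-$\gamma$ bound cannot be invoked at $\gamma=\gamma_{y,y'}$ directly. The standard remedy is a peeling argument: discretize the admissible range by dyadic scales $\gamma_j = K\cdot 2^{-j}$ for $j=1,\ldots,\lceil\log_2(4K/\gamma_{y,y'})\rceil$, apply the fixed-margin bound at each $\gamma_j$ with confidence budget $\delta'/j^2$, and take a union bound. For the realized margin one then selects the unique $j$ with $\gamma_j\le\gamma_{y,y'}\le 2\gamma_j$; this inflates the Rademacher term by a universal constant and produces the extra $\sqrt{\log(\log_{2}(4K/\gamma_{y,y'}))/N_y}$ factor in term~(\textbf{B}). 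Carrying the probability budget cleanly through this dyadic slicing, and making sure the summable allocation still sums to below $\delta'$, is the most delicate bookkeeping.

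Finally I would take a union bound over the $C(C-1)$ ordered pairs at per-pair confidence $\delta/(C(C-1))$, absorb the resulting $\log(C/\delta)$ and universal constants into the $\lesssim$ notation, and re-weight by $p(y)$. The empirical sums produced by the per-pair bounds telescope into exactly $L_{0,1}$ as defined in the statement, and the Rademacher and peeling contributions give terms~(\textbf{A}) and~(\textbf{B}), completing the argument.
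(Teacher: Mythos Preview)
Your proposal is correct and follows essentially the same route as the paper: decompose the error by conditioning on the label, apply a union bound over ordered pairs $(y,y')$, invoke a per-pair margin bound on $S_y$, and aggregate with a union bound over the $C(C-1)$ pairs at confidence $\delta/(C(C-1))$. The only cosmetic difference is that the paper invokes the per-pair bound as a ready-made lemma (Theorem~5 of \cite{marginBound}), which already holds uniformly over all $\gamma>0$ and contains the $\sqrt{\log(\log_2(4K/\gamma))/N_y}$ term; the dyadic peeling you spell out is precisely the content of that lemma, so you are re-deriving what the paper cites as a black box.
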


\begin{corollary}
  Recall NC occurs when class distribution is uniform, so we let
  $p(y) = \frac{1}{C}$ and $N_y = \frac{N}{C}, \ \forall y \in [C]$. 
  The generalization bound in Theorem~\ref{MulticlassMarginBound} then becomes
  \begin{equation*}
  \begin{aligned}
    (\textbf{A}) + (\textbf{B})
    =
    \frac{\mathfrak{R}_{N/C}(\mathcal{F})}{C}\sum_{y=1}^{C}  \sum_{y' \neq y} \frac{1}{\gamma_{y,y'}} + 
    \frac{1}{\sqrt{NC}} \sum_{y=1}^{C} \sum_{y' \neq y} \sqrt{\log(\log_{2} \frac{4K}{\gamma_{y,y'}})}
    + L_{0,1}
  \end{aligned}
  \end{equation*}
\end{corollary}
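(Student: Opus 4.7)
The plan is to derive the corollary as a direct specialization of Theorem~\ref{MulticlassMarginBound} under the balanced-class assumption $p(y)=1/C$ and $N_y=N/C$ for all $y\in[C]$, which is the regime in which neural collapse is known to emerge. No new concentration inequality or Rademacher-complexity argument is needed; the work is purely bookkeeping on the two bound terms $(\textbf{A})$ and $(\textbf{B})$.

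First I would treat term $(\textbf{A})$. Since every $N_y$ equals $N/C$, the Rademacher complexity $\mathfrak{R}_{N_y}(\mathcal{F})$ reduces to a single quantity $\mathfrak{R}_{N/C}(\mathcal{F})$ that can be pulled outside both sums. Combined with the prefactor $p(y)=1/C$, term $(\textbf{A})$ rewrites as
\begin{equation*}
  \sum_{y=1}^{C} \frac{1}{C} \sum_{y' \neq y} \frac{\mathfrak{R}_{N/C}(\mathcal{F})}{\gamma_{y,y'}}
  = \frac{\mathfrak{R}_{N/C}(\mathcal{F})}{C} \sum_{y=1}^{C}\sum_{y'\neq y}\frac{1}{\gamma_{y,y'}},
\end{equation*}
matching the first summand in the corollary.

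Next I would handle term $(\textbf{B})$, where the only subtlety is the interaction between the outer factor $1/C$ and the $1/\sqrt{N_y}=\sqrt{C/N}$ coming out of the square root. After substitution,
\begin{equation*}
  \sum_{y=1}^{C} \frac{1}{C}\sum_{y'\neq y} \sqrt{\frac{\log(\log_{2}\tfrac{4K}{\gamma_{y,y'}})}{N/C}}
  = \frac{1}{C}\cdot \sqrt{\frac{C}{N}} \sum_{y=1}^{C}\sum_{y'\neq y}\sqrt{\log(\log_{2}\tfrac{4K}{\gamma_{y,y'}})},
\end{equation*}
and the prefactor collapses to $1/\sqrt{NC}$, giving the second summand. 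The empirical risk term $L_{0,1}$ is left untouched (the same substitutions are implicit in its definition). Since each step is a mechanical simplification, I do not foresee a true obstacle; the only place to be careful is keeping track of the $\sqrt{C}$ factor that migrates from the sample-size denominator into the numerator when combined with $p(y)=1/C$.
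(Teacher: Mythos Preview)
Your proposal is correct and is exactly the approach the paper takes: the corollary is stated immediately after Theorem~\ref{MulticlassMarginBound} without a separate proof, since it is just the direct substitution $p(y)=1/C$, $N_y=N/C$ that you carry out. Your handling of the $\sqrt{C}$ factor in term $(\textbf{B})$ is correct, and there is nothing further to add.
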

\begin{remark}[\textbf{The reason for the generalization improvement when NC emerges}]
  The Multiclass Margin Bound can provide an explanation for the steady improvement in test accuracy during TPT (as shown in Figure.8 and Table.1 of \cite{PNAS2020}). 
  At the beginning of TPT, the accuracy over the training set reaches $100\%$ and $L_{0,1} = 0$, 
  indicating that generalization performance can no longer improve by reducing $L_{0,1}$. 
  However, as we note in Theorem \ref{Convergence}, $p_{min}$ is still raising if we continue training at this point. 
  And then the margin $\gamma_{i, j}$ also keeps increasing, since $\gamma_{i, j} \ge p_{min} (\forall i \neq j)$.
  Furthermore, the above two terms related to margin continue to decrease, leading to a better generalization performance. 
\end{remark}

\subsection{Non-conservative Generalization}\label{FurtherExploration}
Then we deal with the \textbf{\textit{Question 2}}. 
Our theoretical results suggest that the models that converge to the structure of simplex ETFs with different permutations may exhibit different generalization performances, even if they all have achieved the best performance on the train set and reached neural collapse.
First, we introduce the equivalences of simplex ETFs \citep{holmes2004optimal, bodmann2005frames}:
\begin{definition}[\textbf{Equivalent ETF}]\label{EquivalentETF}
  Given two Simplex ETFs $\{\zeta_{i}\}_{i=1}^{C}, \{\chi_{i}\}_{i=1}^{C}$ in $\mathbb{R}^{d}$, 
  they are
    \textit{Permutation Equivalent} if there exists a permutation matrix $P \in \mathbb{R}^{C}$ such that $[\zeta_i]_{i=1}^{C} = [\chi_i]_{i=1}^{C} P$; and
    \textit{Rotation Equivalent} if there exists a orthogonal matrix $R \in \mathbb{R}^{d}$ such that $[\zeta_i]_{i=1}^{C} = R [\chi_i]_{i=1}^{C}$.
\end{definition}
Inspired by the out-of-sample generalization problem \citep{vural2017study}, we derive the following results. 
\begin{restatable}[]{theorem}{maintheoremtwo}\label{main_theorem}
Given a balanced dataset $S$ and a classifier $(\bm{M}, f(\cdot;\bm{w}))$, 
suppose $(\bm{M}, f(\cdot; \bm{w}))$ can linearly separate $S$ by margin $\{\gamma_{y,y'}\}_{y \neq y'}$. 
Besides, we make the following assumptions: 
\begin{itemize}
  \item $f(\cdot, \bm{w})$ is $L$-Lipschitz for any $\bm{w}$, $i.e.$ 
    $\forall \bm{x}_1, \bm{x}_2$, $\Vert f(\bm{x}_1, \bm{w}) - f(\bm{x}_2, \bm{w}) 
    \Vert \le L \Vert \bm{x}_1 - \bm{x}_2\Vert$ 
  \item $S$ is large enough such that $N_{y} \ge \max_{y' \neq y} \mathcal{N}(\frac{\gamma_{y,y'}}{L\Vert M_y - M_{y'} \Vert}, \mathcal{M}_{y})$ for every class $y$ 
  \item The tight support of probability $\mathcal{P}_{\bm{x}|y}$ is denoted as $\mathcal{M}_y$ 
\end{itemize}
where $\mathcal{N}(\cdot, \mathcal{M}_{y})$ is the covering number of $\mathcal{M}_{y}$. Please refer to Appendix \ref{coveringnumberproof} for its definition. 
Then the expected accuracy of $(\bm{M}, f(\cdot; \bm{w}))$ over the entire distribution is given by 
\begin{equation*}
\begin{aligned}
  \mathbb{P}_{\bm{x},y}\Big(\max_{y'} [M f(\bm{x};\bm{w})]_{y'} = y\Big) > 
  1 - \frac{1}{2 N} \sum_{y=1}^{C} \max_{y' \neq y} \mathcal{N}( \frac{\gamma_{y,y'}}{L\Vert M_y - M_{y'} \Vert}, \mathcal{M}_{y}).
\end{aligned}
\end{equation*}
\end{restatable}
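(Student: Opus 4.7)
The plan is to reduce classification correctness on a fresh sample to a distance condition in data space via the Lipschitz assumption, then exploit the covering number to bound the probability that a test point has no nearby training sample of its own class. Concretely, I would fix $(\bm{x}, y)$ drawn from the test distribution and an alternative class $y' \neq y$, and first show that if there exists any training point $\bm{x}_i \in S_y$ with $\|\bm{x} - \bm{x}_i\| < r_{y,y'} := \gamma_{y,y'}/(L\|M_y - M_{y'}\|)$, then the margin survives the displacement to $\bm{x}$: writing $\langle M_y - M_{y'}, f(\bm{x};\bm{w})\rangle = \langle M_y - M_{y'}, f(\bm{x}_i;\bm{w})\rangle + \langle M_y - M_{y'}, f(\bm{x};\bm{w}) - f(\bm{x}_i;\bm{w})\rangle$, the first summand is at least $\gamma_{y,y'}$ by linear separability and the second is at least $-\|M_y-M_{y'}\|\, L\, \|\bm{x}-\bm{x}_i\| > -\gamma_{y,y'}$ by Cauchy--Schwarz and the Lipschitz property, giving strict positivity. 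Hence $\bm{x}$ is correctly classified whenever, for every $y' \neq y$, there is some training sample of class $y$ within distance $r_{y,y'}$ of $\bm{x}$.

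The next step is a probabilistic covering argument. For each pair $(y,y')$ I partition $\mathcal{M}_y$ into $\mathcal{N}_{y,y'} := \mathcal{N}(r_{y,y'},\mathcal{M}_y)$ pieces of diameter at most $r_{y,y'}$ (induced from a minimum cover, e.g.\ via a Voronoi partition of the cover centers). Whenever a test point and a training point fall in the same piece, their distance is at most $r_{y,y'}$, so Step~1 applies. Letting $p_k$ denote the conditional $\mathcal{P}_{\bm{x}\mid y}$-mass of the $k$-th piece, the probability that $\bm{x}\sim\mathcal{P}_{\bm{x}\mid y}$ lies in a piece containing none of the $N_y$ i.i.d.\ training samples of class $y$ equals $\sum_{k=1}^{\mathcal{N}_{y,y'}} p_k (1-p_k)^{N_y}$. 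Applying the pointwise inequality $p(1-p)^{N_y} \le 1/(eN_y) \le 1/(2N_y)$ term-by-term and summing gives a per-pair failure probability of at most $\mathcal{N}_{y,y'}/(2N_y)$. Dominating the union over $y'\neq y$ by the worst pair yields the per-class error bound $\max_{y'\neq y}\mathcal{N}_{y,y'}/(2N_y)$, and finally averaging over the uniform prior $p(y)=1/C$ with $N_y=N/C$ collapses $1/(2CN_y)$ to $1/(2N)$, producing the stated inequality.

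The main obstacle is the probabilistic step: the cell masses $p_k$ are unknown, non-uniform, and correlated with the partition geometry, yet the covering-number bound must hold uniformly over $\mathcal{P}_{\bm{x}\mid y}$. The pointwise estimate $p(1-p)^{N_y}\le 1/(eN_y)$ keeps the argument distribution-free, but it relies on the sample-size hypothesis $N_y\ge \max_{y'\neq y}\mathcal{N}_{y,y'}$ to ensure the resulting bound does not exceed one and therefore remains informative. A secondary care point is the covering-number convention: the argument genuinely needs a \emph{partition} into sets of diameter at most $r_{y,y'}$, so I would adopt the appendix's definition of $\mathcal{N}(\epsilon,\mathcal{M})$ as counting pieces of diameter $\epsilon$, absorbing a harmless factor of $2$ into constants if a radius-based convention is used instead. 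This is also where the ``improved proof scheme'' advertised in the paper pays off relative to the out-of-sample interpolation route: the covering argument sidesteps high-dimensional density estimation and requires no regularity on $\mathcal{P}_{\bm{x}\mid y}$ beyond the Lipschitz and finite-support assumptions already in place, which is what lets the sample-complexity dependence improve compared with the interpolation-based analysis.
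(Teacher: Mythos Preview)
Your argument is correct and follows essentially the same route as the paper: decompose by class, reduce correctness to a nearest-neighbor distance condition via Cauchy--Schwarz and the Lipschitz property, then bound the probability that no training point of class $y$ lies within $\min_{y'\neq y} r_{y,y'}$ using the covering number. The only difference is packaging: the paper invokes a black-box lemma of Kulkarni--Lugosi stating $\mathbb{P}_{\bm{x}}(\Vert \bm{x}-\hat{\bm{x}}\Vert>\epsilon)\le \mathcal{N}(\epsilon,\mathcal{M})/(2N)$ for the nearest neighbor $\hat{\bm{x}}$, whereas you prove this inline via the partition and the pointwise bound $p(1-p)^{N_y}\le 1/(eN_y)$, which is precisely how that lemma is established. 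Your handling of the intersection over $y'\neq y$ by passing to the smallest radius (hence largest covering number) is also exactly what the paper does.

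One correction to your final paragraph: you have the roles of the two theorems reversed. The covering-number argument you carry out here \emph{is} the ``out-of-sample interpolation'' approach, and it is the one that suffers from the curse of dimensionality, since $\mathcal{N}(\epsilon,\mathcal{M}_y)$ grows exponentially in the ambient dimension and the hypothesis $N_y\ge \max_{y'}\mathcal{N}_{y,y'}$ becomes prohibitive. The ``improved proof scheme'' the paper advertises is the \emph{next} theorem, which replaces the data-space covering argument by a Hoeffding concentration on the feature vectors $f(\bm{x};\bm{w})$, thereby dropping the Lipschitz and large-sample assumptions and obtaining an error term that decays exponentially in $N$ rather than as $\mathcal{O}(1/N)$.
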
 

\begin{figure*}[t]
  \centering
  \includegraphics[width=0.89\textwidth]{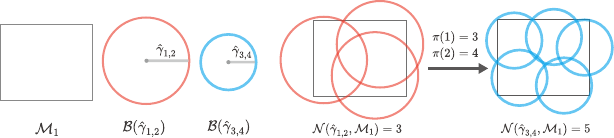}
  \caption{An illustrative example for Remark~\ref{remark4-9}, where we denote $\hat{\gamma}_{y,y'} = \frac{\gamma_{y,y'}}{L\Vert M_{y} - M_{y'} \Vert }$ and $\mathcal{B}(\epsilon)$ as a ball of radius $\epsilon$. 
  As shown in the figure, $\hat{\gamma}_{\pi(1),\pi(2)}$, \emph{i.e.}, $\hat{\gamma}_{3,4}$, is smaller than $\hat{\gamma}_{1,2}$, then $\mathcal{M}_{1}$ requires a larger number of balls to cover it than the one before permutation.
  }
  \label{fig:showTheorem4}
\end{figure*}
\begin{remark}[\textbf{How permutation affects generalization}]\label{remark4-9}
  Although 
  simplex ETF is the structure in the ideal neural collapse phenomenon, in practical implementations, 
  it is hard for a model to achieve absolute collapse with an exact simplex ETF, 
  which means $\gamma_{y,y'}$ cannot be equal for all $y\neq y'$ (we demonstrate empirically that margins of different class pairs have a large variance in the classification tasks in Appendix \ref{app:margin_std}), 
  and $\Vert M_y - M_{y'} \Vert$ behaves similarly.
  Therefore, after permutation $\pi$ is performed on feature, $\hat{\gamma}_{y,y'}$ is not necessarily equal to $\hat{\gamma}_{\pi(y),\pi(y')}$, where we denote $\hat{\gamma}_{y,y'} = \frac{\gamma_{y,y'}}{L\Vert M_y - M_{y'} \Vert}$.
    Finally, this leads to the changed covering number $\mathcal{N}( \frac{\gamma_{\pi(y),\pi(y')}}{L\Vert M_y - M_{y'} \Vert}, \mathcal{M}_{y})$, 
  and affects the generalization bound. We provide an illustrative example in Figure~\ref{fig:showTheorem4} to explain how $\hat{\gamma}_{y,y'}$ affects the covering number.
\end{remark}

The proof of Theorem~\ref{main_theorem} is based on interpolation and covering number complexity of the data space, 
and can provide the insight of why permutation can affect generalization performance. 
However, the dataset size it requires increases at an exponential rate with data dimension. 
Therefore, it is only applicable in low-dimensional data. 
We then further provide another theorem with an improved proof scheme, which not only relaxes the original assumptions, but also provides an error bound with faster convergence. 
\begin{restatable}[]{theorem}{maintheoremtwoww}\label{main_theorem11}
  Given the balanced dataset $S$ and a classifier $(\bm{M}, f(\cdot;\bm{w}))$, 
  suppose $(\bm{M}, f(\cdot; \bm{w}))$ can linearly separate $S$ by margin $\{\gamma_{y,y'}\}_{y \neq y'}$. 
  Assume the maximum norm of features in $y$-th class is $\rho_y = \sup_{\bm{w}, \bm{x} \in \mathbb{P}_{x | y}} \Vert f(\bm{x}; \bm{w}) \Vert$.
  Then the expected accuracy of $(\bm{M}, f(\cdot; \bm{w}))$ is given by 
  \begin{equation*}
    \begin{aligned}
      \text{Acc}
      \ge
      1 -  \frac{2 d}{C} \sum_{y=1}^{C}
    \Bigg(
      \mathcal{H}\left(1, d, \rho_y, N/C\right)+
      \mathcal{H}\left(
        \min_{y' \neq y} \frac{\gamma_{y, y'}}{\Vert M_y - M_{y'}\Vert} - \sqrt{N/C}, d, \rho_y, N/C\right)
    \Bigg)
    \end{aligned}
    \end{equation*}
  where we denote $\mathcal{H}(\alpha, d, \rho, n) = \exp \left( \frac{-n \alpha^{2}}{8d^2 \rho^2}\right)$.
\end{restatable}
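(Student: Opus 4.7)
The shape of the bound, with exponents of the form $\exp(-n\alpha^{2}/(8d^{2}\rho^{2}))$ and a prefactor $2d$, strongly suggests a scalar Hoeffding concentration applied coordinatewise and then combined by a union bound over the $d$ standard basis directions. My plan is to combine this with a union bound over class pairs and a margin-preservation argument that transfers control from the empirical class mean to a single test sample.

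First, I would decompose the error event by a union bound over ordered class pairs. For a test draw $(\bm{x},y)$ under $\mathcal{P}_{\mathcal{X}\times\mathcal{Y}}$ with $p(y)=1/C$, misclassification requires some $y'\ne y$ with $(M_{y}-M_{y'})^{T} f(\bm{x};\bm{w}) \le 0$. Normalizing by $\|M_{y}-M_{y'}\|$, this becomes $u_{y,y'}^{T} f(\bm{x};\bm{w}) \le 0$ where $u_{y,y'}$ is the unit direction of $M_{y}-M_{y'}$, and the training separability guarantees $u_{y,y'}^{T} f(\bm{x}_{i};\bm{w}) \ge \hat{\gamma}_{y,y'} := \gamma_{y,y'}/\|M_{y}-M_{y'}\|$ for every $\bm{x}_{i}\in S_{y}$.

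Second, I would introduce the class-conditional empirical mean $\bar{f}_{y} = \frac{C}{N}\sum_{i:\,y_{i}=y} f(\bm{x}_{i};\bm{w})$ and the population mean $\mu_{y}$. Since separation is preserved under averaging, $u_{y,y'}^{T}\bar{f}_{y} \ge \hat{\gamma}_{y,y'}$. Using $\|f(\bm{x};\bm{w})\|\le\rho_{y}$, each coordinate of $f-\mu_{y}$ is bounded, so scalar Hoeffding per coordinate together with a union bound over the $d$ coordinates produces a concentration inequality of the form $\mathbb{P}(\|\bar{f}_{y}-\mu_{y}\|\ge \alpha) \le 2d\,\mathcal{H}(\alpha,d,\rho_{y},N/C)$. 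This is the source of the $2d/C$ prefactor (after multiplying by $p(y)=1/C$ and summing).

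Third, I would combine the two concentration events to bound the per-pair error probability. The first $\mathcal{H}(1,d,\rho_{y},N/C)$ term covers a "sanity" event concerning the ambient ball of radius $\rho_{y}$ in which the features live, in particular ruling out a gross deviation of a single test feature from the training support. The second $\mathcal{H}$ term, evaluated at $\alpha = \min_{y'\ne y}\hat{\gamma}_{y,y'} - \sqrt{N/C}$, is where the margin enters: after paying the $\sqrt{N/C}$ slack needed to transfer control from the empirical class mean to an individual test point in the direction $u_{y,y'}$, what remains of the margin controls the exponent of the Hoeffding bound. Summing these pair-wise bounds over $y'\ne y$ (absorbed by the minimum), then over $y$, and using the class prior $1/C$ yields the displayed inequality.

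The main obstacle, which I expect to be the conceptual heart of the proof, is exactly this last step: a single test sample does \emph{not} self-concentrate, so one cannot just invoke Hoeffding to it directly. The resolution that the theorem statement hints at is a two-step reduction: use Hoeffding once to push $\bar{f}_{y}$ to $\mu_{y}$, and then use the bounded-range structure (feature ball of radius $\rho_{y}$ in $d$ dimensions) a second time to relate $f(\bm{x};\bm{w})$ to $\mu_{y}$ in the relevant direction. The appearance of the $\sqrt{N/C}$ subtraction in the second $\mathcal{H}$ term is the bookkeeping device that accounts for how much margin must be "spent" on this second reduction. Verifying that this accounting is tight — in particular that $d^{2}\rho_{y}^{2}$ rather than $d\rho_{y}^{2}$ is the right scale, and that the $2d$ union-bound factor is not swallowed by a sharper vector Hoeffding — is the detail I would need to check most carefully.
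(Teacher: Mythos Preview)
Your approach is essentially the paper's: coordinate-wise Hoeffding plus a union bound over the $d$ coordinates (their ``High-Dimensional Hoeffding'', yielding the $2d$ factor and the $d^{2}\rho^{2}$ scale), combined with the margin on the empirical class mean $\bar f_{y}$ and a triangle inequality through the population mean $\mu_{y}$. One clarification on the bookkeeping: the paper allocates the threshold $\sqrt{N/C}$ to the \emph{single test sample} deviation $\|f(\bm{x};\bm{w})-\mu_{y}\|$ (this is literally Hoeffding with $n=1$, which is valid and gives $\exp(-(\sqrt{N/C})^{2}/8d^{2}\rho_{y}^{2})=\mathcal{H}(1,d,\rho_{y},N/C)$), and the remaining budget $\min_{y'}\hat\gamma_{y,y'}-\sqrt{N/C}$ to the $n=N/C$ concentration of $\|\bar f_{y}-\mu_{y}\|$; so your ``obstacle'' dissolves once you accept that $n=1$ Hoeffding with a large threshold is exactly the second reduction you describe.
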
 

\begin{remark}
  The proof of Theorem \ref{main_theorem11} follows the same route to Theorem~\ref{main_theorem}. 
  The main difference is that Theorem \ref{main_theorem11} uses Hoeffding's Inequality \citep{hoeffding1994probability}
  to perform probability concentration on the feature learning of the model, instead of the interpolation-based technique.
  Through a similar mechanism, this theorem can also indicate
  that permutation can impact generalization, but is based on fewer and weaker assumptions, because it relaxes the assumptions of Lipschitz property and the large size of dataset and only assumes there is a maximum norm $\rho_{y}$ of feature in each class. Meanwhile, the error in Theorem \ref{main_theorem11} converges faster, which decays with the sample size $N$ at an exponential rate, while the original one in Theorem \ref{main_theorem} is $\mathcal{O}(\frac{1}{N})$.
  Interestingly, it also reveals that the magnitude of features, $\rho_y$,
  can also affect generalization. An excessive magnitude can impair the classifier's generalization capacity.
\end{remark}

Except permutation transformation of simplex ETF, previous work \citep{hao2015sparsifying} points out that the rotation over the feature can impact the sparsity, which as a result, also potentially affects the generalization performance \citep{maurer2012structured,hastie2015statistical}.
Therefore, we conclude that the solution variability caused by permutation or rotation from different optimization paths leads to different generalization performance. We refer to this novel property as \emph{``non-conservative generalization''}. In experiments, we find that models converging to equivalent simplex ETFs of different permutations and rotations show a large variance with respect to test loss and test accuracy, which empirically support the non-conservative generalization suggested by our theoretical results. 




\subsection{Discussions}

When it comes to the reason behind non-conservative generalization, it may be the optimization path.
Training with different random factors, such as the initialization, may go through different optimization paths and converge in different label alignment and feature direction, although all of them can present the NC properties. 
This gives us the insight that there are some favored optimization paths, which can be beneficial to representation learning since their directed label alignment (permutation) and feature direction (rotation) can exploit an inductive bias {\citep{goyal2022inductive}}, and consequently help to learn high-quality representation more easily. 
Meanwhile, there are also bad ones that may conflict with the inductive bias. 
For example, label alignment can be related to a widely used inductive bias in contrastive learning {\citep{tian2019contrastive, chen2020simple, wang2020understanding}}: the images that have closer semantic knowledge should be encoded closer in the representation space. 
Thus it is reasonable for us to align cat closer to dog in the representation space, rather than the vehicle, to achieve a smaller margin between cat and dog than the margin between {cat and vehicle}, because cat has much more similar appearances with dog, but is completely different from vehicle.
Therefore, through the lens of neural collapse, how to specify a rotation and permutation such that the model training can be performed along the right optimization path for better exploiting inductive bias deserves future exploration.







\section{Experiments}\label{Experiments_main}
\subsection{Verificatinon of the Increasing Margin during TPT}

The experiments in this part are established to validate the Theorem \ref{opt0}. As shown in Figure \ref{Margin}, the margin $p_{min}$ keeps increasing as the the CE loss is approximating zero. 
The setting and super parameters in the experiments of Figure \ref{Margin} are the same as Section \ref{NonConservativeGeneralization}.

\begin{figure}[t]
  \centering
  \includegraphics[width=0.98\textwidth]{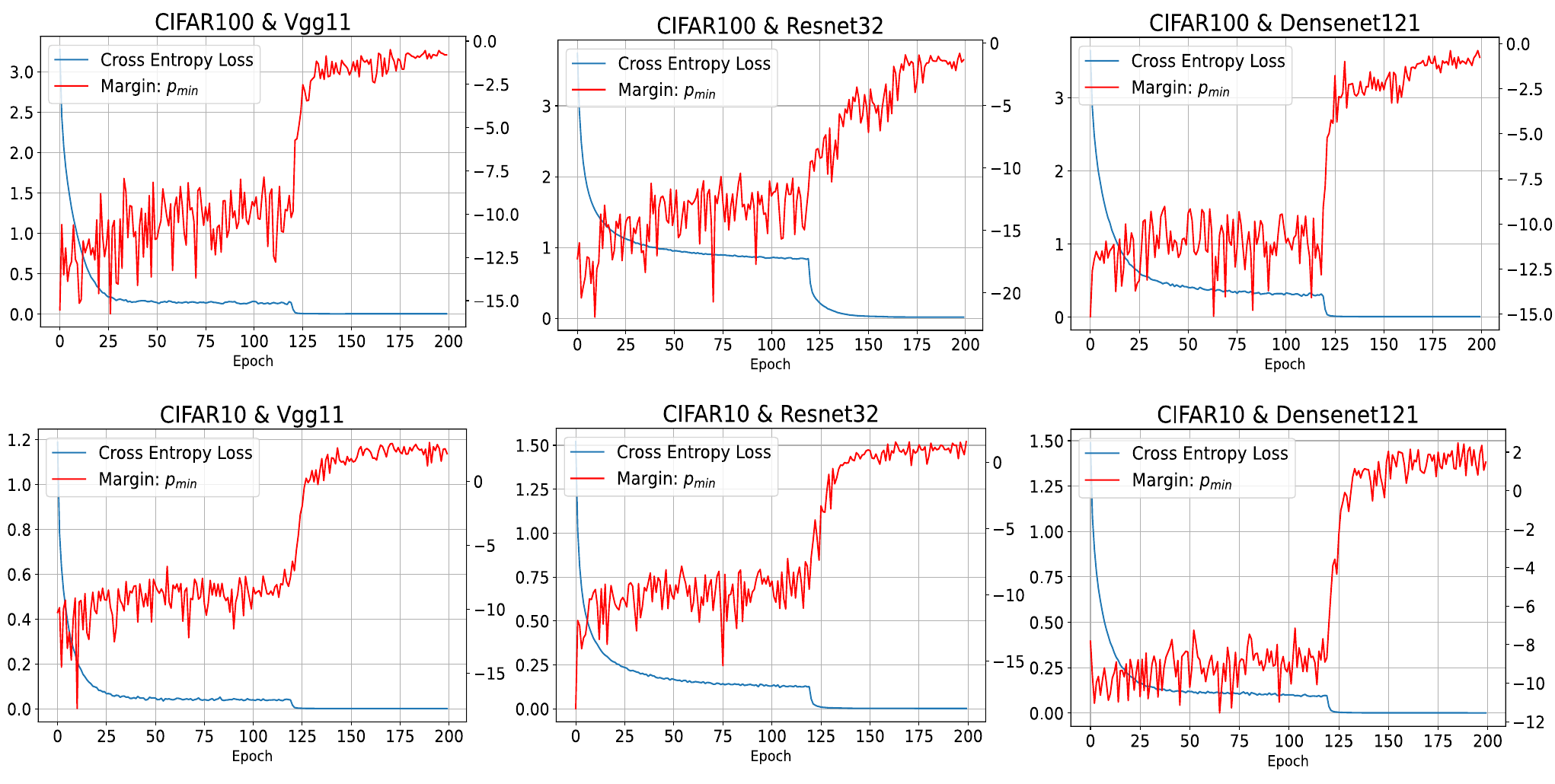}
  \caption{The CE loss and $p_{min}$ on the train set of VGG11, ResNet32 and DenseNet121 on CIFAR10/100, respectively. 
  The Y-axis on the left side specifies CE loss, and the right side specifies the margin $p_{min}$.}
  \label{Margin}
  \vspace{-4mm}
\end{figure}

\subsection{Verification of the Non-Conservative Generalization}\label{NonConservativeGeneralization}

To investigate the impact of rotation and permutation transformations of simplex ETF
on the generalization performance of deep neural networks, we conducted a series of experiments. The comprehensive results are illustrated in Figure \ref{fig:sen}.

\textbf{Implementation details.} 
The details of our experiments can be found in Appendix \ref{appen_details}. 

\textbf{How to reveal the non-conservative generalization.} 
For a classification problem, we first generate a simplex ETF $\bm{M}^{\star} \in \mathbb{R}^{d \times C}$ before training.
and randomly generate $10$ permutation matrices $\{P_i \in \mathbb{R}^{C \times C}\}_{i=1}^{10}$ and rotation matrices $\{R_i \in \mathbb{R}^{d \times d}\}_{i=1}^{10}$. 
Then, we train the model $10$ times using the equivalent simplex ETF. 
To ensure the model would learn a expected simplex ETF, we follow the approach in \cite{NEURIPS2022_f7f5f501}. 
They point out in their Theorem.1: if the linear classifier
is fixed as simplex ETF, then the final features learned by the model 
would converge to be Simplex ETF with the same direction to classifier. 
In each time, we initialize the linear classifier as the equivalent simplex ETF
$R_i \bm{M}^{\star}$ or $\bm{M}^{\star} P_i$, 
and do not perform optimization on it during training. 
To exclude the impact of the randomness factors, such as mini-batch, augmentation and parameter initialization, 
we use the same random seed for each training of 10 times. 
Once the NC phenomenon occurs, we know that the $10$ models have learned 
Equivalent simplex ETFs. Finally, we compare their generalization performances by 
evaluating the CE loss and accuracy on a test set. 

\textbf{The category number v.s. feature dimension.} 
To cover a more general case, we also provide experimental results when class number $C$ is larger than feature dimension $d$. 
Since the optimal structure of NC in that case remains unclear \footnote{
Note that simplex ETF only exists when the number of class $C$ is smaller than feature dimension $d$ (refer to Definition~\ref{ETF}). Most of existing studies about the optimal structure in NC only justify simplex ETF when $C \le d$, without considering the case that $C > d$.
}, we obtain the structure by optimizing CE loss on the unconstrained feature models. Other experimental details when $C > d$ are completely the same to the case when $d>=C$.

\begin{figure*}[t]
  \centering
  \subfigure[Test accuracy comparison with different permutation transform.]{
   \includegraphics[width=0.33\textwidth]{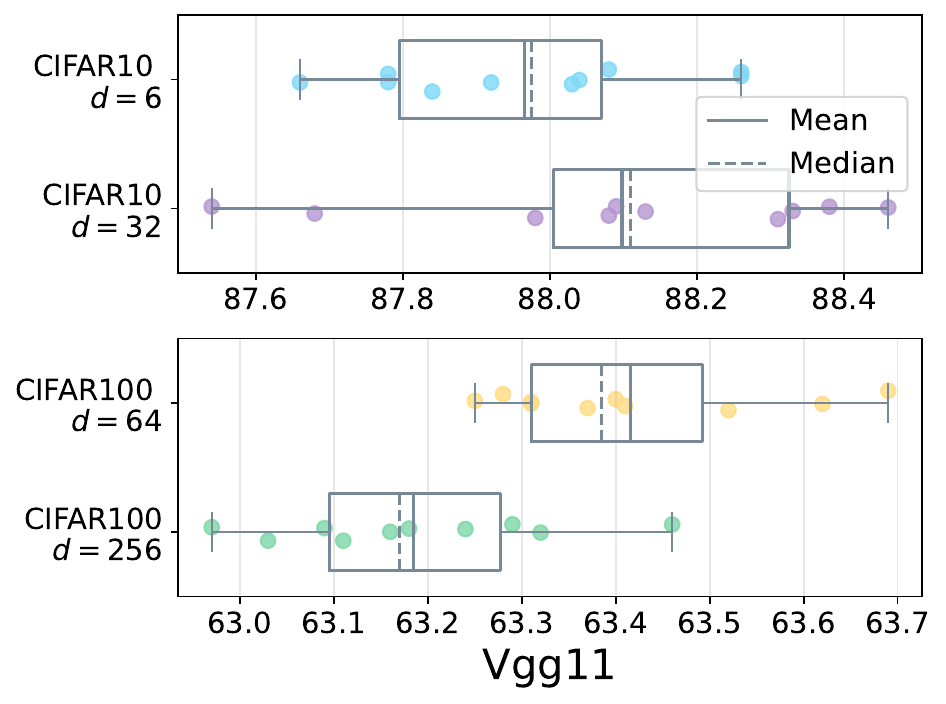}
   \includegraphics[width=0.33\textwidth]{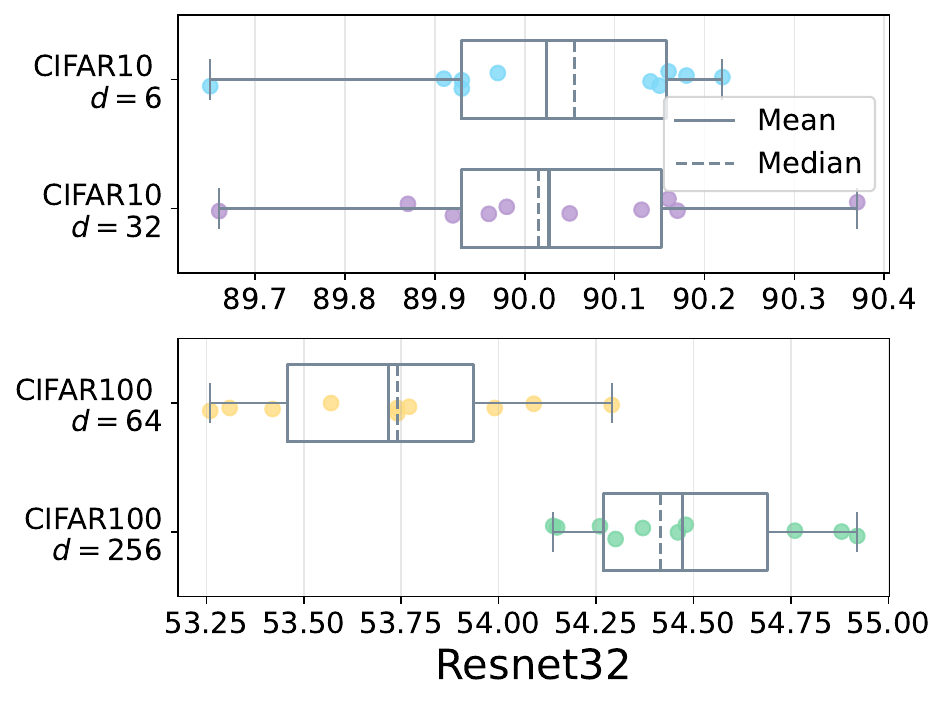}
   \includegraphics[width=0.33\textwidth]{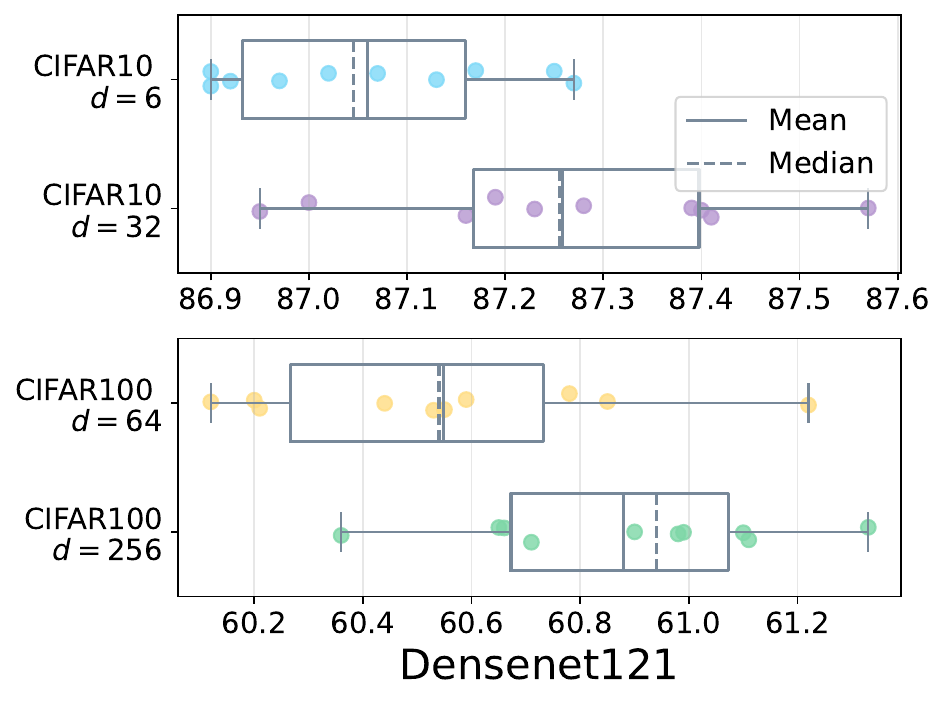}
  }
  \subfigure[Test accuracy comparison with different rotation transform.]{
    \includegraphics[width=0.33\textwidth]{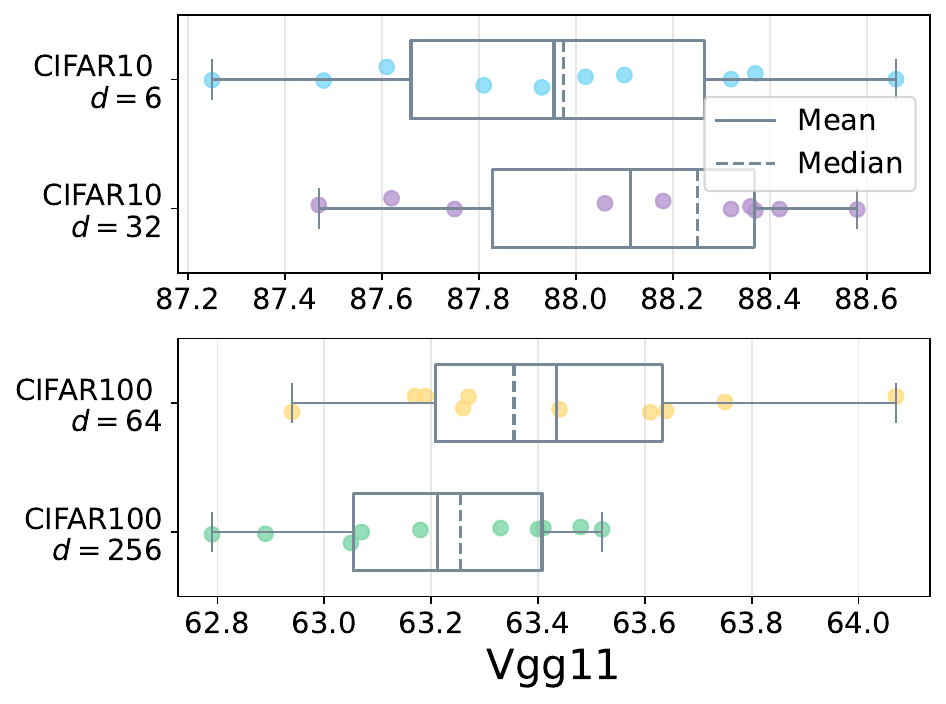}
    \includegraphics[width=0.33\textwidth]{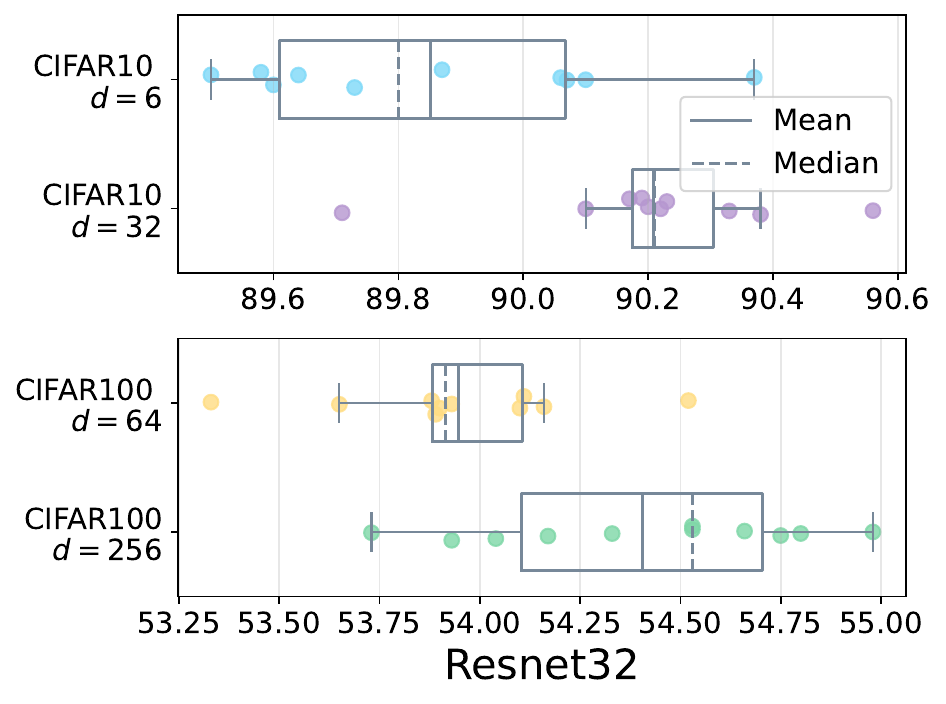}
    \includegraphics[width=0.33\textwidth]{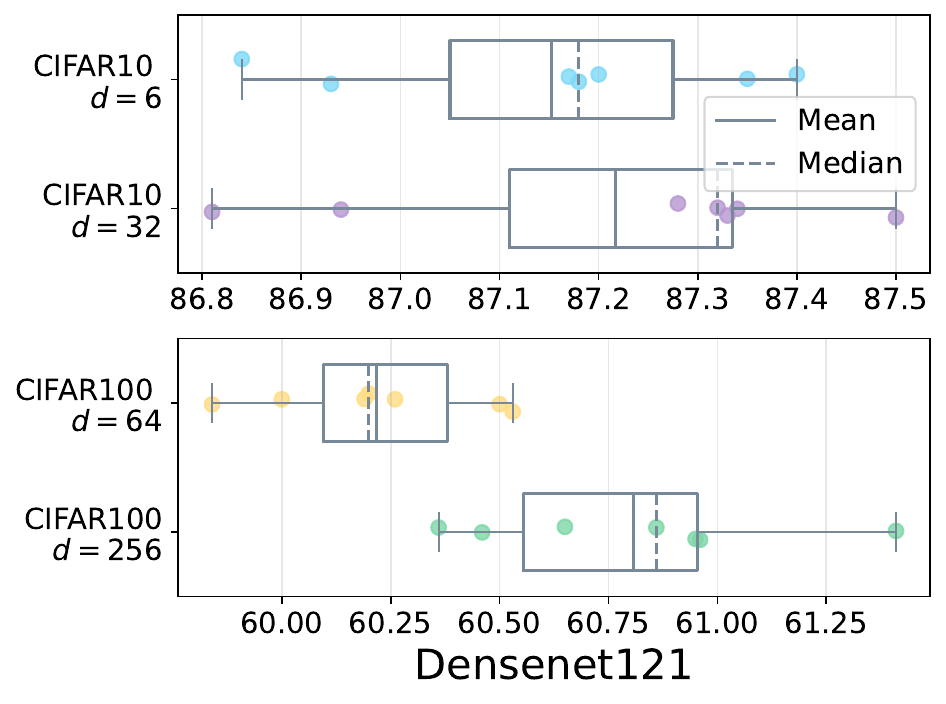}
  }
  \caption{Sensitivity analysis of test accuracy metric for different rotation and permutation on CIFAR10/100 and three different backbones,
  ResNet32, VGG11 and DenseNet121.}
  \label{fig:sen}
\end{figure*}

\textbf{Results.}
Figure \ref{fig:sen} presents the comparisons of test accuracy metric, where the metrics in each box comes from $10$ times training with the same random seed and super parameters. 
The only difference between them is that they have different initial permutation and rotation. 
All metrics are recorded when the classification model converges to NC ($100\%$ accuracy and zero loss on training set), which can ensure that the representation learning of models have reach their corresponding equivalent simplex ETF. 
We observe that, even though they achieve perfect performance on the train set,
they still exhibit significant differences in test CE loss (shown in Figure~\ref{fig:sen1} in Appendix \ref{app:testce}) and accuracy. 
These experimental results demonstrates that different feature alignment to label (permutation) and feature direction (rotation) can influence generalization capacity of models. 

\section{Conclusion}
In this paper, we explore the generalization behaviors of classification models during the occurrence of neural collapse, 
We find the minimization of CE loss can be approximately seen as a multi-class SVM, 
which leads to increasing margins between every two classes during the terminal phase of training (TPT) when neural collapse emerges. 
Then based on our proposed multi-class margin generalization bound, we point out that a larger margin enjoys a stronger generalization capacity, which theoretically explains the test performance improvement observed during TPT.
In addition, we find that the solution variability with different permutations can change the terms in the generalization error bound and consequently lead to different generalization performance. Finally, we validate our theoretical findings in experiments, showing that the permutation and rotation transforms both cause a large variance of test performance for models with 100\% train accuracy, which verifies the non-conservation generalization property revealed by our theoretical results. 

\section*{Statement}

\subsection*{Ethics Statement}
We can ensure all authors adhere to the ICLR Code of Ethics. And our study does NOT involve any concerns about potential conflicts, biased practices, and other problems about public health, privacy, fairness, security, dishonest behaviors, \emph{etc}.

\subsection*{Reproducibility Statement} 

Our study is reproducible and does not involve novel models or algorithms. 
Clear explanations for theoretical results in this paper, 
including all assumptions and proof of claims, are documented in the appendix. Finally, a comprehensive description of data processing steps and other experimental details for all datasets is available in the appendix.

\bibliography{main}

\begin{thebibliography}{49}
\providecommand{\natexlab}[1]{#1}
\providecommand{\url}[1]{\texttt{#1}}
\expandafter\ifx\csname urlstyle\endcsname\relax
  \providecommand{\doi}[1]{doi: #1}\else
  \providecommand{\doi}{doi: \begingroup \urlstyle{rm}\Url}\fi

\bibitem[Bartlett \& Mendelson(2002)Bartlett and Mendelson]{Rademacher}
Peter~L Bartlett and Shahar Mendelson.
\newblock Rademacher and gaussian complexities: Risk bounds and structural
  results.
\newblock \emph{Journal of Machine Learning Research}, 3\penalty0
  (Nov):\penalty0 463--482, 2002.

\bibitem[Belkin et~al.(2019)Belkin, Hsu, Ma, and Mandal]{belkin2019reconciling}
Mikhail Belkin, Daniel Hsu, Siyuan Ma, and Soumik Mandal.
\newblock Reconciling modern machine-learning practice and the classical
  bias--variance trade-off.
\newblock \emph{Proceedings of the National Academy of Sciences}, 116\penalty0
  (32):\penalty0 15849--15854, 2019.

\bibitem[Bodmann \& Paulsen(2005)Bodmann and Paulsen]{bodmann2005frames}
Bernhard~G Bodmann and Vern~I Paulsen.
\newblock Frames, graphs and erasures.
\newblock \emph{Linear algebra and its applications}, 404:\penalty0 118--146,
  2005.

\bibitem[Bousquet \& Elisseeff(2002)Bousquet and
  Elisseeff]{bousquet2002stability}
Olivier Bousquet and Andr{\'e} Elisseeff.
\newblock Stability and generalization.
\newblock \emph{The Journal of Machine Learning Research}, 2:\penalty0
  499--526, 2002.

\bibitem[Chen et~al.(2020)Chen, Kornblith, Norouzi, and Hinton]{chen2020simple}
Ting Chen, Simon Kornblith, Mohammad Norouzi, and Geoffrey Hinton.
\newblock A simple framework for contrastive learning of visual
  representations.
\newblock In \emph{International conference on machine learning}, pp.\
  1597--1607. PMLR, 2020.

\bibitem[Fang et~al.(2021)Fang, He, Long, and Su]{fang2021exploring}
Cong Fang, Hangfeng He, Qi~Long, and Weijie~J Su.
\newblock Exploring deep neural networks via layer-peeled model: Minority
  collapse in imbalanced training.
\newblock \emph{Proceedings of the National Academy of Sciences}, 118\penalty0
  (43):\penalty0 e2103091118, 2021.

\bibitem[Galanti et~al.(2022)Galanti, Gy{\"o}rgy, and Hutter]{galanti2021role}
Tomer Galanti, Andr{\'a}s Gy{\"o}rgy, and Marcus Hutter.
\newblock On the role of neural collapse in transfer learning.
\newblock In \emph{International Conference on Learning Representations}, 2022.

\bibitem[Goyal \& Bengio(2022)Goyal and Bengio]{goyal2022inductive}
Anirudh Goyal and Yoshua Bengio.
\newblock Inductive biases for deep learning of higher-level cognition.
\newblock \emph{Proceedings of the Royal Society A}, 478\penalty0
  (2266):\penalty0 20210068, 2022.

\bibitem[Graf et~al.(2021)Graf, Hofer, Niethammer, and
  Kwitt]{graf2021dissecting}
Florian Graf, Christoph Hofer, Marc Niethammer, and Roland Kwitt.
\newblock Dissecting supervised contrastive learning.
\newblock In \emph{International Conference on Machine Learning}, pp.\
  3821--3830. PMLR, 2021.

\bibitem[Han et~al.(2022)Han, Papyan, and Donoho]{han2022neural}
X.Y. Han, Vardan Papyan, and David~L. Donoho.
\newblock Neural collapse under {MSE} loss: Proximity to and dynamics on the
  central path.
\newblock In \emph{International Conference on Learning Representations}, 2022.

\bibitem[Hao et~al.(2015)Hao, Dong, and Fan]{hao2015sparsifying}
Ning Hao, Bin Dong, and Jianqing Fan.
\newblock Sparsifying the fisher linear discriminant by rotation.
\newblock \emph{Journal of the Royal Statistical Society Series B: Statistical
  Methodology}, 77\penalty0 (4):\penalty0 827--851, 2015.

\bibitem[Hastie et~al.(2015)Hastie, Tibshirani, and
  Wainwright]{hastie2015statistical}
Trevor Hastie, Robert Tibshirani, and Martin Wainwright.
\newblock \emph{Statistical learning with sparsity: the lasso and
  generalizations}.
\newblock CRC press, 2015.

\bibitem[He et~al.(2016)He, Zhang, Ren, and Sun]{he2016deep}
Kaiming He, Xiangyu Zhang, Shaoqing Ren, and Jian Sun.
\newblock Deep residual learning for image recognition.
\newblock In \emph{Proceedings of the IEEE conference on computer vision and
  pattern recognition}, pp.\  770--778, 2016.

\bibitem[Hoeffding(1994)]{hoeffding1994probability}
Wassily Hoeffding.
\newblock Probability inequalities for sums of bounded random variables.
\newblock \emph{The collected works of Wassily Hoeffding}, pp.\  409--426,
  1994.

\bibitem[Holmes \& Paulsen(2004)Holmes and Paulsen]{holmes2004optimal}
Roderick~B Holmes and Vern~I Paulsen.
\newblock Optimal frames for erasures.
\newblock \emph{Linear Algebra and its Applications}, 377:\penalty0 31--51,
  2004.

\bibitem[Huang et~al.(2017)Huang, Liu, Van Der~Maaten, and
  Weinberger]{huang2017densely}
Gao Huang, Zhuang Liu, Laurens Van Der~Maaten, and Kilian~Q Weinberger.
\newblock Densely connected convolutional networks.
\newblock In \emph{Proceedings of the IEEE conference on computer vision and
  pattern recognition}, pp.\  4700--4708, 2017.

\bibitem[Hui et~al.(2022)Hui, Belkin, and Nakkiran]{hui2022limitations}
Like Hui, Mikhail Belkin, and Preetum Nakkiran.
\newblock Limitations of neural collapse for understanding generalization in
  deep learning.
\newblock \emph{arXiv preprint arXiv:2202.08384}, 2022.

\bibitem[Ji et~al.(2022)Ji, Lu, Zhang, Deng, and Su]{ji2022an}
Wenlong Ji, Yiping Lu, Yiliang Zhang, Zhun Deng, and Weijie~J Su.
\newblock An unconstrained layer-peeled perspective on neural collapse.
\newblock In \emph{International Conference on Learning Representations}, 2022.

\bibitem[Kakade et~al.(2008)Kakade, Sridharan, and Tewari]{marginBound}
Sham~M Kakade, Karthik Sridharan, and Ambuj Tewari.
\newblock On the complexity of linear prediction: Risk bounds, margin bounds,
  and regularization.
\newblock In \emph{Advances in Neural Information Processing Systems},
  volume~21. Curran Associates, Inc., 2008.

\bibitem[Kothapalli et~al.(2022)Kothapalli, Rasromani, and
  Awatramani]{kothapalli2022neural}
Vignesh Kothapalli, Ebrahim Rasromani, and Vasudev Awatramani.
\newblock Neural collapse: A review on modelling principles and generalization.
\newblock \emph{arXiv preprint arXiv:2206.04041}, 2022.

\bibitem[Krizhevsky(2009)]{Krizhevsky2009LearningML}
Alex Krizhevsky.
\newblock Learning multiple layers of features from tiny images.
\newblock 2009.

\bibitem[Kulkarni \& Posner(1995)Kulkarni and Posner]{kulkarni1995rates}
Sanjeev~R Kulkarni and Steven~E Posner.
\newblock Rates of convergence of nearest neighbor estimation under arbitrary
  sampling.
\newblock \emph{IEEE Transactions on Information Theory}, 41\penalty0
  (4):\penalty0 1028--1039, 1995.

\bibitem[Li et~al.(2022)Li, Liu, Zhou, Lu, Fernandez-Granda, Zhu, and
  Qu]{li2022principled}
Xiao Li, Sheng Liu, Jinxin Zhou, Xinyu Lu, Carlos Fernandez-Granda, Zhihui Zhu,
  and Qing Qu.
\newblock Principled and efficient transfer learning of deep models via neural
  collapse.
\newblock \emph{arXiv preprint arXiv:2212.12206}, 2022.

\bibitem[Liu et~al.(2023)Liu, Zhang, Hu, Cao, Yao, and Pan]{liu2023inducing}
Xuantong Liu, Jianfeng Zhang, Tianyang Hu, He~Cao, Yuan Yao, and Lujia Pan.
\newblock Inducing neural collapse in deep long-tailed learning.
\newblock In \emph{International Conference on Artificial Intelligence and
  Statistics}, pp.\  11534--11544. PMLR, 2023.

\bibitem[Lu \& Steinerberger(2022)Lu and Steinerberger]{lu2022neural}
Jianfeng Lu and Stefan Steinerberger.
\newblock Neural collapse under cross-entropy loss.
\newblock \emph{Applied and Computational Harmonic Analysis}, 59:\penalty0
  224--241, 2022.

\bibitem[Maurer et~al.(2012)Maurer, Pontil, and Lugosi]{maurer2012structured}
Andreas Maurer, Massimiliano Pontil, and Gabor Lugosi.
\newblock Structured sparsity and generalization.
\newblock \emph{Journal of Machine Learning Research}, 13\penalty0 (3), 2012.

\bibitem[Mixon et~al.(2020)Mixon, Parshall, and Pi]{mixon2020neural}
Dustin~G Mixon, Hans Parshall, and Jianzong Pi.
\newblock Neural collapse with unconstrained features.
\newblock \emph{arXiv preprint arXiv:2011.11619}, 2020.

\bibitem[Mohri et~al.(2012)Mohri, Rostamizadeh, and Talwalkar]{Foundations}
Mehryar Mohri, Afshin Rostamizadeh, and Ameet Talwalkar.
\newblock \emph{Foundations of Machine Learning}.
\newblock The MIT Press, 2012.
\newblock ISBN 026201825X.

\bibitem[Neyshabur et~al.(2017)Neyshabur, Bhojanapalli, McAllester, and
  Srebro]{neyshabur2017exploring}
Behnam Neyshabur, Srinadh Bhojanapalli, David McAllester, and Nati Srebro.
\newblock Exploring generalization in deep learning.
\newblock In \emph{Advances in neural information processing systems},
  volume~30, 2017.

\bibitem[Papyan et~al.(2020)Papyan, Han, and Donoho]{PNAS2020}
Vardan Papyan, XY~Han, and David~L Donoho.
\newblock Prevalence of neural collapse during the terminal phase of deep
  learning training.
\newblock \emph{Proceedings of the National Academy of Sciences}, 117\penalty0
  (40):\penalty0 24652--24663, 2020.

\bibitem[Peifeng et~al.(2023)Peifeng, Xu, Wen, Yang, Shao, and
  Huang]{pmlr-v202-peifeng23a}
Gao Peifeng, Qianqian Xu, Peisong Wen, Zhiyong Yang, Huiyang Shao, and Qingming
  Huang.
\newblock Feature directions matter: Long-tailed learning via rotated balanced
  representation.
\newblock In \emph{International Conference on Machine Learning}, volume 202 of
  \emph{Proceedings of Machine Learning Research}, pp.\  27542--27563, 2023.

\bibitem[Poggio \& Liao(2020)Poggio and Liao]{poggio2020explicit}
Tomaso Poggio and Qianli Liao.
\newblock Explicit regularization and implicit bias in deep network classifiers
  trained with the square loss.
\newblock \emph{arXiv preprint arXiv:2101.00072}, 2020.

\bibitem[Simonyan \& Zisserman(2015)Simonyan and Zisserman]{simonyan2014very}
Karen Simonyan and Andrew Zisserman.
\newblock Very deep convolutional networks for large-scale image recognition.
\newblock In \emph{International Conference on Learning Representations}, 2015.

\bibitem[Soudry et~al.(2018)Soudry, Hoffer, Nacson, Gunasekar, and
  Srebro]{soudry2018implicit}
Daniel Soudry, Elad Hoffer, Mor~Shpigel Nacson, Suriya Gunasekar, and Nathan
  Srebro.
\newblock The implicit bias of gradient descent on separable data.
\newblock \emph{The Journal of Machine Learning Research}, 19\penalty0
  (1):\penalty0 2822--2878, 2018.

\bibitem[Tian et~al.(2020)Tian, Krishnan, and Isola]{tian2019contrastive}
Yonglong Tian, Dilip Krishnan, and Phillip Isola.
\newblock Contrastive multiview coding.
\newblock In \emph{European Conference on Computer Vision}, pp.\  776--794,
  2020.

\bibitem[Tirer \& Bruna(2022)Tirer and Bruna]{tirer2022extended}
Tom Tirer and Joan Bruna.
\newblock Extended unconstrained features model for exploring deep neural
  collapse.
\newblock In \emph{International Conference on Machine Learning}, pp.\
  21478--21505. PMLR, 2022.

\bibitem[Vapnik \& Chervonenkis(2015)Vapnik and
  Chervonenkis]{vapnik2015uniform}
Vladimir~N Vapnik and A~Ya Chervonenkis.
\newblock On the uniform convergence of relative frequencies of events to their
  probabilities.
\newblock In \emph{Measures of complexity: festschrift for alexey
  chervonenkis}, pp.\  11--30. Springer, 2015.

\bibitem[Vural \& Guillemot(2017)Vural and Guillemot]{vural2017study}
Elif Vural and Christine Guillemot.
\newblock A study of the classification of low-dimensional data with supervised
  manifold learning.
\newblock \emph{The Journal of Machine Learning Research}, 18\penalty0
  (1):\penalty0 5741--5795, 2017.

\bibitem[Wang \& Isola(2020)Wang and Isola]{wang2020understanding}
Tongzhou Wang and Phillip Isola.
\newblock Understanding contrastive representation learning through alignment
  and uniformity on the hypersphere.
\newblock In \emph{International Conference on Machine Learning}, pp.\
  9929--9939. PMLR, 2020.

\bibitem[Xie et~al.(2022)Xie, Yang, Cai, and He]{Xie2022NeuralCI}
Liang Xie, Yibo Yang, Deng Cai, and Xiaofei He.
\newblock Neural collapse inspired attraction-repulsion-balanced loss for
  imbalanced learning.
\newblock \emph{Neurocomputing}, 527:\penalty0 60--70, 2022.

\bibitem[Xu \& Mannor(2012)Xu and Mannor]{xu2012robustness}
Huan Xu and Shie Mannor.
\newblock Robustness and generalization.
\newblock \emph{Machine learning}, 86:\penalty0 391--423, 2012.

\bibitem[Yang et~al.(2022)Yang, Chen, Li, Xie, Lin, and
  Tao]{NEURIPS2022_f7f5f501}
Yibo Yang, Shixiang Chen, Xiangtai Li, Liang Xie, Zhouchen Lin, and Dacheng
  Tao.
\newblock Inducing neural collapse in imbalanced learning: Do we really need a
  learnable classifier at the end of deep neural network?
\newblock In \emph{Advances in Neural Information Processing Systems},
  volume~35, pp.\  37991--38002. Curran Associates, Inc., 2022.

\bibitem[Yang et~al.(2023)Yang, Yuan, Li, Lin, Torr, and Tao]{yang2023neural}
Yibo Yang, Haobo Yuan, Xiangtai Li, Zhouchen Lin, Philip Torr, and Dacheng Tao.
\newblock Neural collapse inspired feature-classifier alignment for few-shot
  class-incremental learning.
\newblock In \emph{The Eleventh International Conference on Learning
  Representations}, 2023.

\bibitem[Yaras et~al.(2022{\natexlab{a}})Yaras, Wang, Zhu, Balzano, and
  Qu]{NEURIPS2022_4b3cc0d1}
Can Yaras, Peng Wang, Zhihui Zhu, Laura Balzano, and Qing Qu.
\newblock Neural collapse with normalized features: A geometric analysis over
  the riemannian manifold.
\newblock In \emph{Advances in Neural Information Processing Systems},
  volume~35, pp.\  11547--11560. Curran Associates, Inc., 2022{\natexlab{a}}.

\bibitem[Yaras et~al.(2022{\natexlab{b}})Yaras, Wang, Zhu, Balzano, and
  Qu]{yaras2022neural}
Can Yaras, Peng Wang, Zhihui Zhu, Laura Balzano, and Qing Qu.
\newblock Neural collapse with normalized features: A geometric analysis over
  the riemannian manifold.
\newblock In \emph{Advances in Neural Information Processing Systems},
  volume~35, pp.\  11547--11560. Curran Associates, Inc., 2022{\natexlab{b}}.

\bibitem[Zhang et~al.(2021)Zhang, Bengio, Hardt, Recht, and
  Vinyals]{zhang2021understanding}
Chiyuan Zhang, Samy Bengio, Moritz Hardt, Benjamin Recht, and Oriol Vinyals.
\newblock Understanding deep learning (still) requires rethinking
  generalization.
\newblock \emph{Communications of the ACM}, 64\penalty0 (3):\penalty0 107--115,
  2021.

\bibitem[Zhou et~al.(2022{\natexlab{a}})Zhou, Li, Ding, You, Qu, and
  Zhu]{zhou2022optimization}
Jinxin Zhou, Xiao Li, Tianyu Ding, Chong You, Qing Qu, and Zhihui Zhu.
\newblock On the optimization landscape of neural collapse under mse loss:
  Global optimality with unconstrained features.
\newblock In \emph{International Conference on Machine Learning}, pp.\
  27179--27202. PMLR, 2022{\natexlab{a}}.

\bibitem[Zhou et~al.(2022{\natexlab{b}})Zhou, You, Li, Liu, Liu, Qu, and
  Zhu]{zhou2022all}
Jinxin Zhou, Chong You, Xiao Li, Kangning Liu, Sheng Liu, Qing Qu, and Zhihui
  Zhu.
\newblock Are all losses created equal: A neural collapse perspective.
\newblock In \emph{Advances in Neural Information Processing Systems},
  volume~35, pp.\  31697--31710. Curran Associates, Inc., 2022{\natexlab{b}}.

\bibitem[Zhu et~al.(2021)Zhu, Ding, Zhou, Li, You, Sulam, and
  Qu]{zhu2021geometric}
Zhihui Zhu, Tianyu Ding, Jinxin Zhou, Xiao Li, Chong You, Jeremias Sulam, and
  Qing Qu.
\newblock A geometric analysis of neural collapse with unconstrained features.
\newblock \emph{Advances in Neural Information Processing Systems},
  34:\penalty0 29820--29834, 2021.

\end{thebibliography}
\bibliographystyle{iclr2024_conference}

\newpage

\appendix

\section{Experiments}\label{app:ex}

\subsection{More Details about Experiments}\label{appen_details}

\textbf{Network Architecture and Dataset} 
Our experiments involve two image classification datasets: CIFAR10/100 \cite{Krizhevsky2009LearningML}. 
And for every dataset, we use three different convolutional neural networks to verify our finding, 
including ResNet \cite{he2016deep}, VGG \cite{simonyan2014very}, DenseNet \cite{huang2017densely}. 
Both datasets are balanced with 10 and 100 classes respectively, each having 
$500$ and $5,000$ training images per class. 
We attach a linear layer after the end of backbone, which can transform feature dimensions. 
For CIFAR10, we use $32$ and $6$ as the feature dimensions of backbone. 
And for CIFAR100, we use $256$ and $64$ as the feature dimensions of backbone. 

\textbf{Training}
To reach NC phenomenon during training, we follow \cite{PNAS2020}'s practice. 
For all experiments, we minimize cross entropy loss using stochastic gradient descent with 
epoch $200$, momentum $0.9$, batch size $256$ and weight decay $5\times 10^{-4}$. 
Besides, the learning rate is set as $5\times 10^{-2}$ and annealed by ten-fold at $120$-th and $160$-th epoch for every dataset. 
As for data preprocess, we only perform standard pixel-wise mean subtracting and deviation dividing on images. 
To achieve $100\%$ accuracy on training set, we remove all dropout layers in the backbone and only perform RandomFilp augmentation.

\begin{figure*}[t]
  \centering
  \subfigure[Test cross-entropy loss with different permutation transform.]{
   \includegraphics[width=0.33\textwidth]{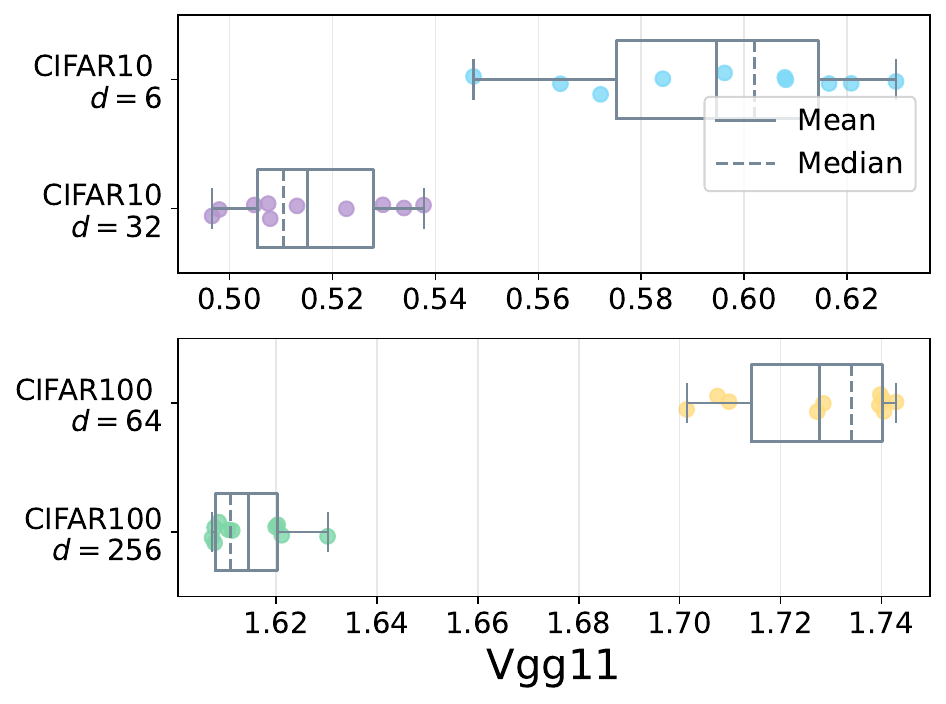}
   \includegraphics[width=0.33\textwidth]{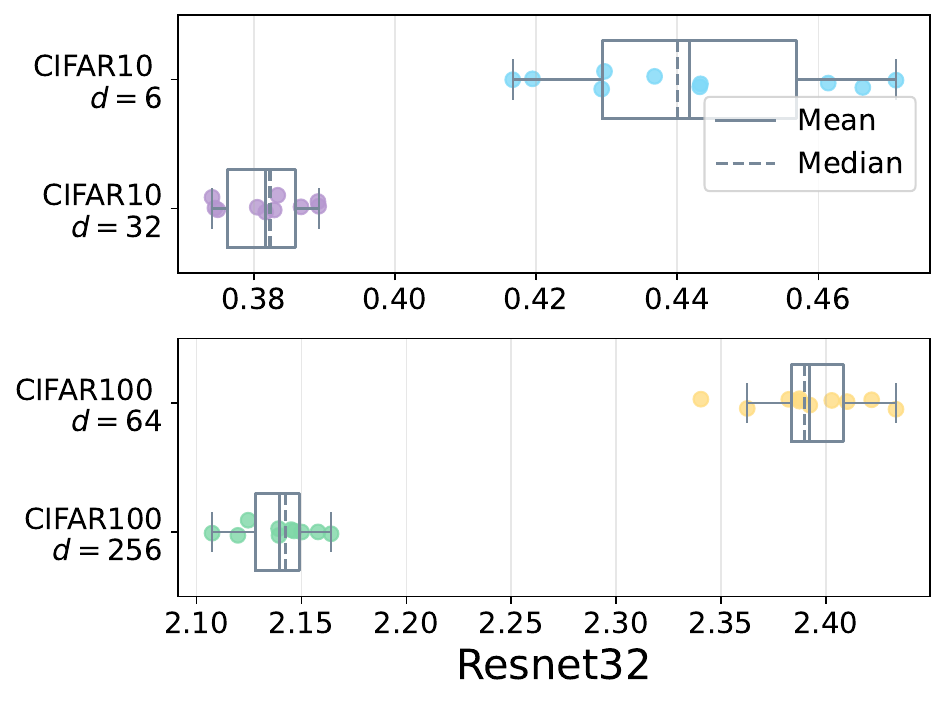}
  \includegraphics[width=0.33\textwidth]{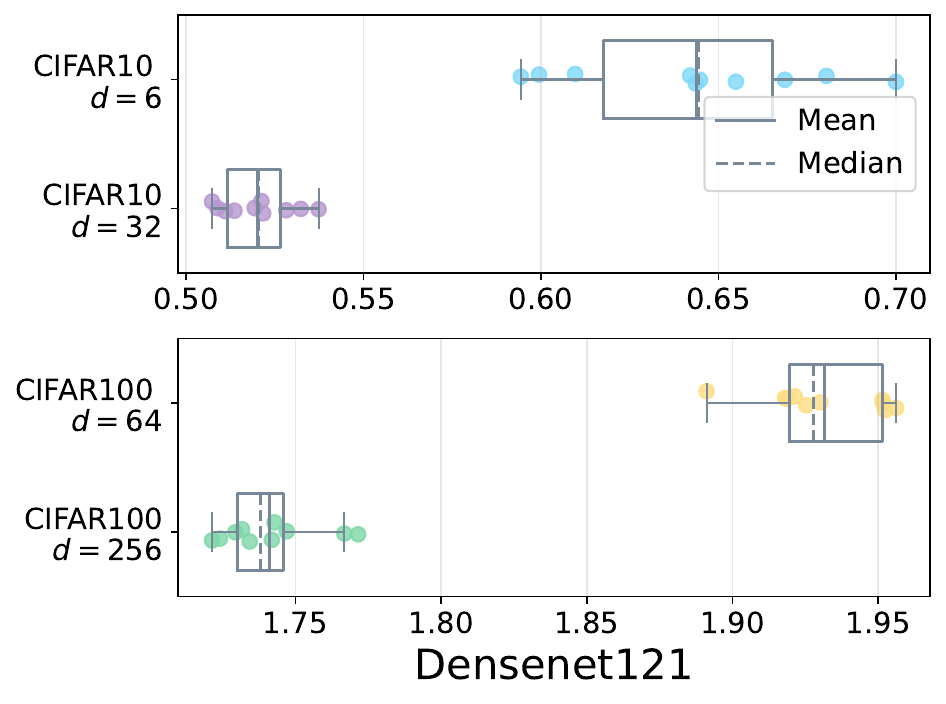}
  }
   \subfigure[Test cross-entropy loss with different rotation transform.]{
    \includegraphics[width=0.33\textwidth]{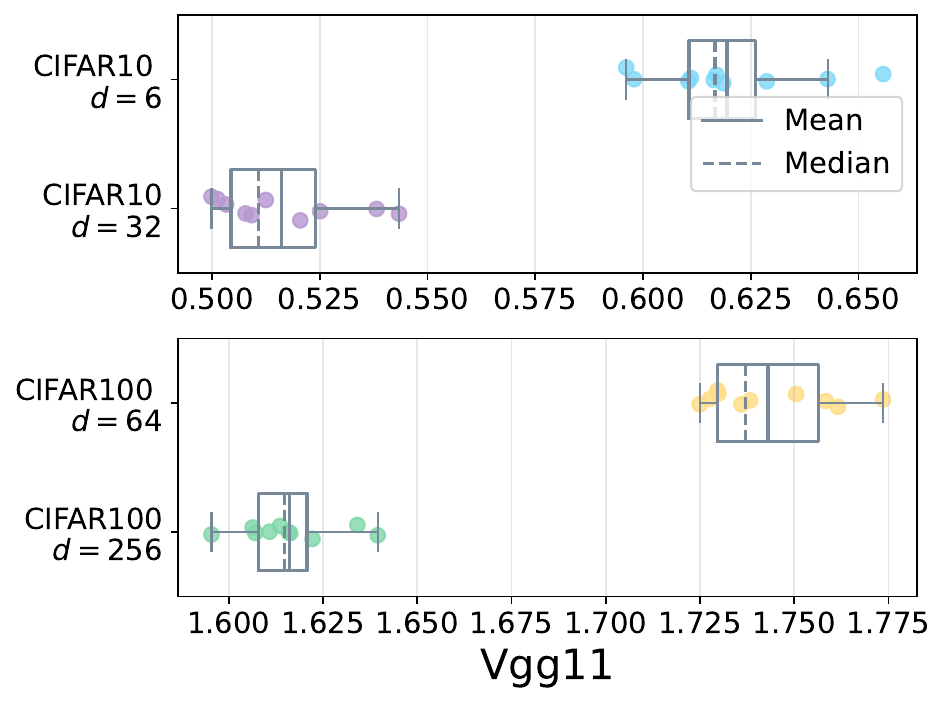}
    \includegraphics[width=0.33\textwidth]{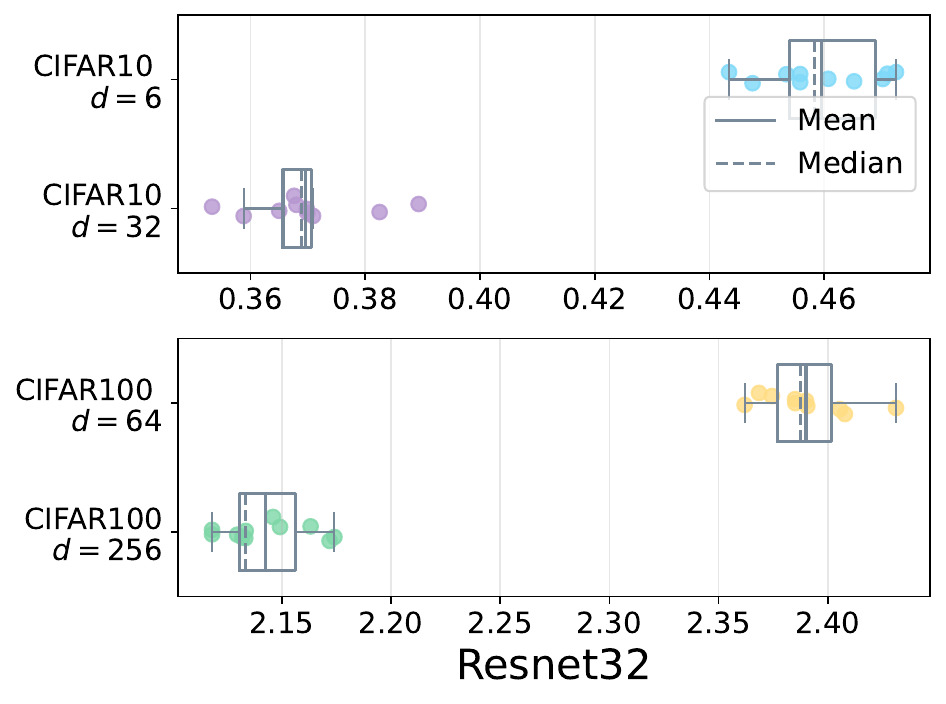}
   \includegraphics[width=0.33\textwidth]{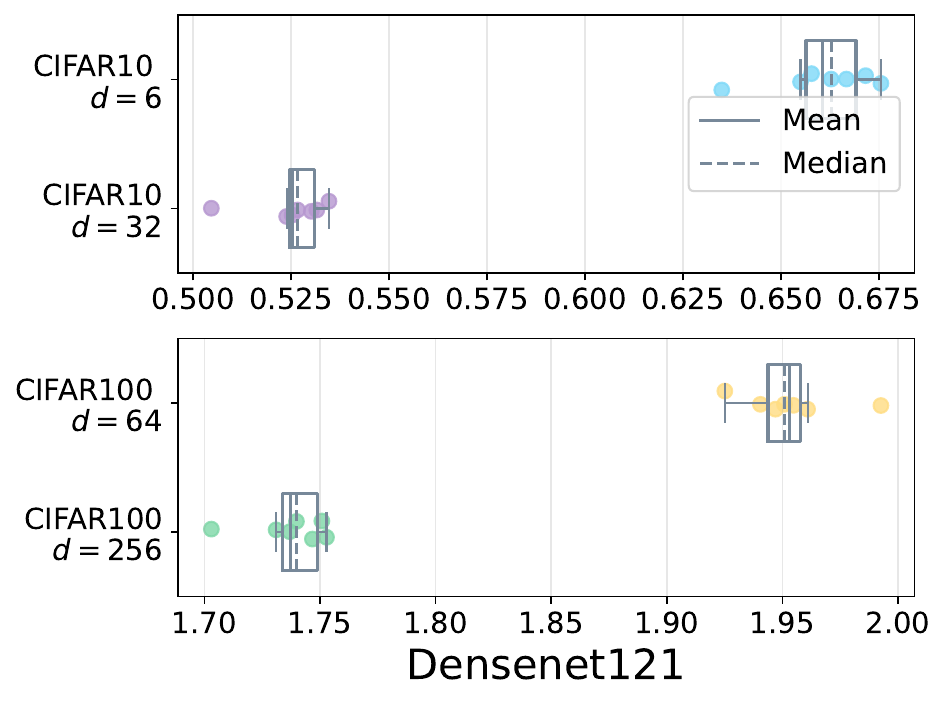}
  }
  \caption{Sensitivity analysis of test CE metric for different rotation and permutation on CIFAR10/100 and three different backbones,
  ResNet32, VGG11 and Denset121.}
  \label{fig:sen1}
\end{figure*}

\subsection{Large Variance of pair-wise Margin} \label{app:margin_std}

\begin{figure*}[ht]
  \centering
  \subfigure[Pair-wise margins comparison with three different permutations on DenseNet121.]{
   \includegraphics[width=0.98\textwidth]{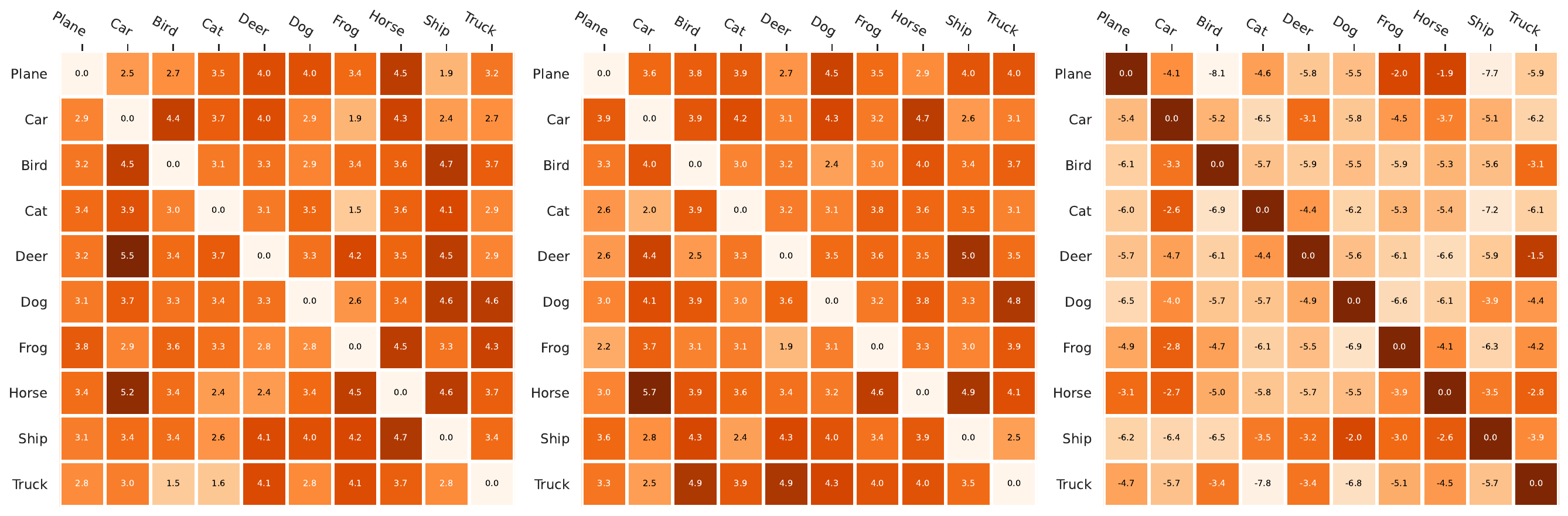}
  }
  \subfigure[Pair-wise margins comparison with three different permutations on ResNet32.]{
   \includegraphics[width=0.98\textwidth]{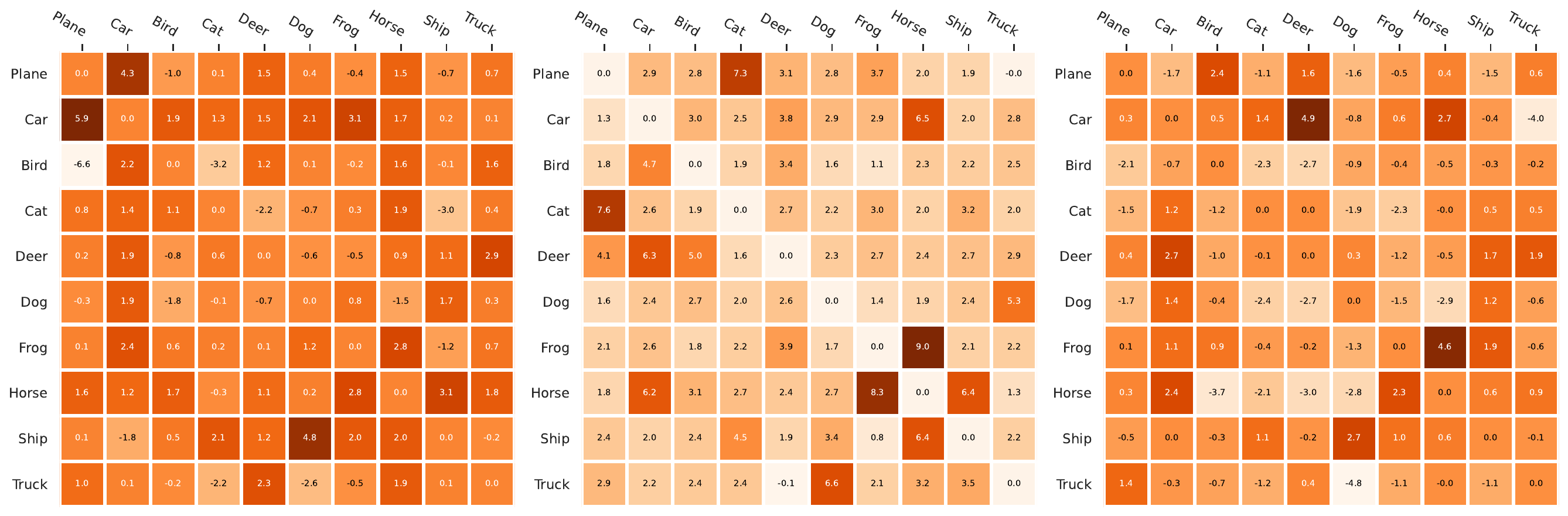   }
  }
  \subfigure[Pair-wise margins comparison with three different rotations on DenseNet121.]{
   \includegraphics[width=0.98\textwidth]{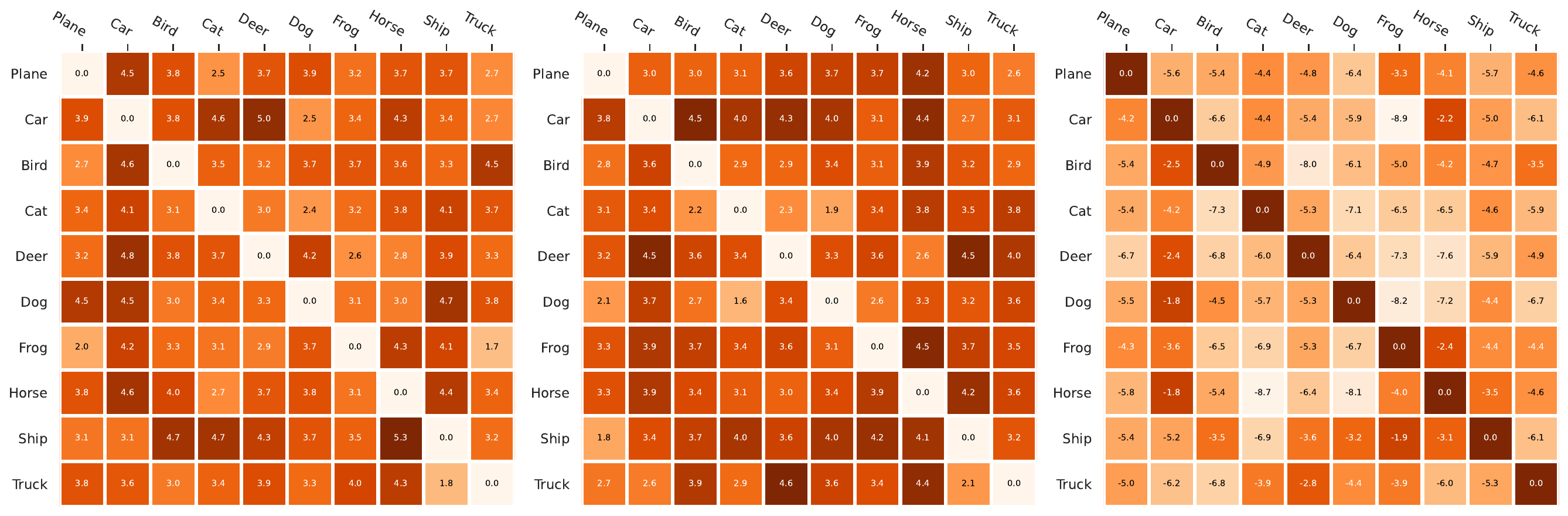     }
  }
  \subfigure[Pair-wise margins comparison with three different rotations on  ResNet32.]{
   \includegraphics[width=0.98\textwidth]{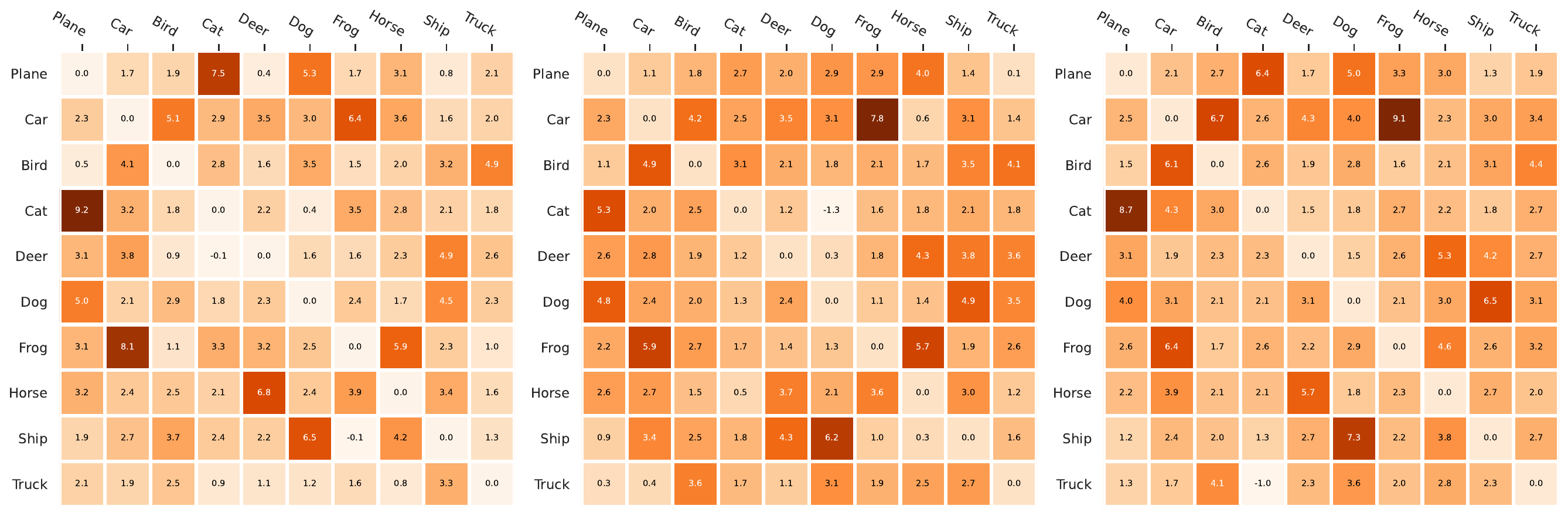      }
  }
  \caption{
  Values comparison of pair-wise margins on training of ResNet32 and DenseNet121 with different permutations and rotations on CIFAR10.}
  \label{fig:margin_std}
\end{figure*}

In the experiments of Section \ref{Experiments_main}, we also records margins between each class pair. We record them at the epoch that model has the maximal test accuracy during TPT.
The Figure \ref{fig:margin_std} illustrates the all margins in the training of three backbones on the CIFAR10. 
We could find that these margins exhibit large variability, which means that the feature of classification model does not exhibit the rigorous Simplex ETF structure.
We can observe that the difference between margins are still much large even during TPT, which provide empirical support for our conclusion in Remark \ref{remark4-9}.

\subsection{More metric Comparison}\label{app:testce}
We also provide test CE loss comparison to verify the non-conservative generalization, which is illustrated in Figure \ref{fig:sen1}.

\section{Proof of Theorem \ref{Convergence}}

\MulticlassSVM*

\begin{proof}
  For simplicity, we leave out the upper script $(t)$. 
  First, we have $\forall t$
  \begin{equation}\label{1}
  \begin{aligned}
    \text{CELoss}(\bm{Z}, \bm{M}) = &
    \sum_{y=1}^{C} \sum_{i=1}^{N/C} - \log \frac{\exp \big(\langle M_{y}, \bm{z}_{y, i} \rangle \big)}{\sum_{y'} \exp \big(\langle M_{y'}, \bm{z}_{y, i} \rangle\big)}
    \\ = &
    \sum_{y=1}^{C} \sum_{i=1}^{N/C} \log \bigg(1 + \sum_{y' \neq y} \exp \big( \langle M_{y'}-M_{y}, \bm{z}_{y, i} \rangle \big)\bigg) 
    \\ \le &
    \sum_{y=1}^{C} \sum_{i=1}^{N/C} \log \bigg(1 + (C-1) \exp \big(\max_{y' \neq y} \{  \langle M_{y'}-M_{y}, \bm{z}_{y, i} \rangle \big) \}\bigg)
    \\ \le &
    \frac{N}{C} \sum_{y=1}^{C} \log \bigg(1 + (C-1) \exp \big(\max_{y' \neq y} \max_{i \in [N/C]} \{  \langle M_{y'}-M_{y}, \bm{z}_{y, i} \rangle \big) \}\bigg)
    \\ \le &
    N \max_{y \in [C]} \log \bigg(1 + (C-1) \exp \big(\max_{y' \neq y} \max_{i \in [N/C]} \{  \langle M_{y'}-M_{y}, \bm{z}_{y, i} \rangle \big) \}\bigg)
    \\ = &
    N \log \bigg(1 + (C-1) \exp \big(\max_{y \in [C]} \max_{y' \neq y} \max_{i \in [N/C]} \{  \langle M_{y'}-M_{y}, \bm{z}_{y, i} \rangle \big) \}\bigg)
  \end{aligned}
  \end{equation}
  In addition, we have
  \begin{equation}\label{2}
    \begin{aligned}
    \log \bigg(1 + \exp \big(\max_{y \in [C]} \max_{y' \neq y} 
    \max_{i \in [N/C]} \{  \langle M_{y'}-M_{y}, \bm{z}_{y, i} \rangle \big) \}
    \bigg)
    \le \text{CELoss} (\bm{Z}, \bm{M})
  \end{aligned}
  \end{equation}
  We denote $\max_{y \in [C]} \max_{y' \neq y}$ as $\max_{y' \neq y}$ and define the margin of entire dataset 
  (refer to Section.3.1 of \cite{ji2022an}) as follow:
  $$
    p_{min} := \min_{y \neq y'} \min_{i \in [N/C]} \langle M_y  - M_{y'}, z_{y, i} \rangle
  $$
  Therefore, we have 
  \begin{equation}\label{hah}
  \begin{aligned}
    \underbrace{
      \log \bigg(1 + \exp \left( - p_{min} \right) \bigg)
    }_{\ell_{1}(p_{min})}
    \le 
    \text{CELoss} (\bm{Z}, \bm{M})
    \le 
    N \underbrace{
      \log \bigg(1 + (C-1) \exp \left( - p_{min} \right) \bigg)
    }_{\ell_{C-1}(p_{min})}
  \end{aligned}
  \end{equation}
  where $\ell_{a}(p) = \log (1 + a e^{-p})$.
  Then we represent $\ell_{a}(\cdot)$ as the form of exponential function, $i.e.$
  \begin{equation*}
  \begin{aligned}
    \ell_{a}(p) = e^{-\phi_{a}(p)} \ \ \text{and} \ \  \phi_{a}(p) = -\log \log (1 + a e^{-p})
  \end{aligned}.
  \end{equation*}
  Denote 
  the inverse function of $\phi_{a}(\cdot)$ as $\Phi_{a}(\cdot)$, where $\Phi_{a}(p) = -\log (\frac{e^{e^{-p}}-1}{a})$. 
  Then continue from (\ref{hah}), we have 
  \begin{equation*}
  \begin{aligned}
    & \ell_{1}(p_{min}) \le \text{CELoss}(\bm{Z}, \bm{M}) \le N \ell_{C-1}(p_{min})
    \\ \Leftrightarrow &
    e^{-\phi_{1}(p_{min})} \le \text{CELoss}(\bm{Z}, \bm{M}) \le N e^{-\phi_{C-1}(p_{min})}
    \\ \Leftrightarrow &
    \phi_{C-1}(p_{min}) - \log(N) \le - \log (\text{CELoss}(\bm{Z}, \bm{M})) \le \phi_{1}(p_{min})
  \end{aligned}
  \end{equation*}
  According to the monotonicity of $\Phi_{1}(\cdot)$, we have 
  \begin{equation*}
  \begin{aligned}
    \Phi_{1} \left(\phi_{C-1}(p_{min}) - \log(N) \right) \le \Phi_{1} \left( - \log (\text{CELoss}(\bm{Z}, \bm{M})) \right) \le p_{min}
  \end{aligned}
  \end{equation*}
  Use the mean value theorem, there exists a $\xi \in (\phi_{C-1}(p_{min}) - \log(N), \phi_{1}(p_{min}))$ such that 
  \begin{equation*}
  \begin{aligned}
    \Phi_{1}(\phi_{C-1}(p_{min}) - \log(N)) =  p_{min} - \Phi_{1}^{'}(\xi) (\phi_{1}(p_{min}) - \phi_{C-1}(p_{min}) + \log(N)),
  \end{aligned}
  \end{equation*}
  then
  \begin{equation}\label{3}
  \begin{aligned}
    p_{min} - 
    \underbrace{\Phi_{1}^{'}(\xi) (\phi_{1}(p_{min}) - \phi_{C-1}(p_{min}) + \log(N))}_{\Delta(t)}
     \le \Phi_{1} \left( - \log (\text{CELoss}(\bm{Z}, \bm{M})) \right) \le p_{min}
  \end{aligned}
  \end{equation}
  Then we will show $\Delta(t) = \mathcal{O}(1) (t \rightarrow \infty)$.
  Since 
  \begin{equation*}
  \begin{aligned}
    \xi > \phi_{C-1}(p_{min}) - \log(N) \ge - \log (\text{CELoss}(\bm{Z}, \bm{M})) - \log(N)
  \end{aligned},
  \end{equation*}
  we know $\xi \rightarrow \infty$ and $p_{min} \rightarrow \infty$ as $\text{CELoss}(\bm{Z}, \bm{M}) \rightarrow 0$. 
  By simple calculation, we have $\phi_{1}(p_{min}) - \phi_{C-1}(p_{min}) \rightarrow \log(C-1)$ 
  and $\Phi_{1}^{'}(\xi) = \frac{e^{e^{-\xi}-\xi}}{e^{e^{-\xi}}-1} \rightarrow 1$. 
  Therefore, as $\text{CELoss}(\bm{Z}, \bm{M}) \rightarrow 0$, 
  we have $p_{min} \rightarrow +\infty$ according to (\ref{3}) and monotonicity of $\Phi_{1}(-\log(\cdot))$.
\end{proof}

\section{Proof of Theorem \ref{MulticlassMarginBound}}\label{ProofofTheorem45}

The following lemma provide a two-class classification generalization bound based on margin.

\begin{lemma}[\textbf{Theorem.5 of \cite{marginBound}: Margin Bound}]\label{margin_bounds}
  Consider a data space $\mathcal{X}$ and a probability measure $\mathcal{P}$ on it. There is a dataset $\{x_{i}\}_{i=1}^{n}$ that contains $n$ samples, 
  which are drawn $i.i.d$ from $\mathcal{P}$.
  Consider an arbitrary function class $\mathcal{F}$ such that $\forall f \in \mathcal{F}$ we have $\sup_{x \in \mathcal{X}} |f(x)| \le K$, 
  then with probability at least $1 - \delta$ over the sample, for all margins $\gamma > 0$ and all $f \in \mathcal{F}$ we have,
  \begin{equation*}
  \begin{aligned}
    \mathbb{P}_{x} (f(x) \le 0) \le 
    \sum_{i = 1}^{n} \frac{\mathbb{I}(f(x_{i}) \le \gamma)}{n} + 
    \frac{\mathfrak{R}_{n}(\mathcal{F})}{\gamma} + 
    \sqrt{\frac{\log(\log_{2} \frac{4K}{\gamma})}{n}} + 
    \sqrt{\frac{\log(1 / \delta)}{2n}}
  \end{aligned}
  \end{equation*}
\end{lemma}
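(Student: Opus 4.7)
The plan is to follow the classical three-step recipe for margin-based generalization bounds: surrogate sandwich, Rademacher concentration at a fixed scale, and uniform discretization over $\gamma$.

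First, I would introduce a ramp surrogate $\phi_{\gamma}(t)$ that is piecewise linear with $\phi_{\gamma}(t) = 1$ for $t \le 0$, $\phi_{\gamma}(t) = 1 - t/\gamma$ for $t \in (0, \gamma]$, and $\phi_{\gamma}(t) = 0$ for $t > \gamma$. This surrogate is $(1/\gamma)$-Lipschitz, takes values in $[0,1]$, and sandwiches the indicators: $\mathbb{I}(t \le 0) \le \phi_{\gamma}(t) \le \mathbb{I}(t \le \gamma)$. Consequently $\mathbb{P}_{x}(f(x) \le 0) \le \mathbb{E}\phi_{\gamma}(f(x))$ and $\tfrac{1}{n}\sum_{i} \phi_{\gamma}(f(x_{i})) \le \tfrac{1}{n}\sum_{i} \mathbb{I}(f(x_{i}) \le \gamma)$, so it suffices to control the generalization gap of $\phi_{\gamma} \circ f$ uniformly over $f \in \mathcal{F}$.

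Second, for a fixed $\gamma$ I would apply the standard bounded-difference argument: since $\phi_{\gamma} \circ f$ is $[0,1]$-valued, McDiarmid's inequality plus the symmetrization trick yield, with probability at least $1 - \delta'$,
\begin{equation*}
\sup_{f \in \mathcal{F}} \Bigl( \mathbb{E}\phi_{\gamma}(f(x)) - \tfrac{1}{n}\textstyle\sum_{i=1}^{n} \phi_{\gamma}(f(x_{i})) \Bigr) \le 2\, \mathfrak{R}_{n}(\phi_{\gamma} \circ \mathcal{F}) + \sqrt{\tfrac{\log(1/\delta')}{2n}}.
\end{equation*}
Then I would invoke the Ledoux--Talagrand contraction inequality, which combined with the $(1/\gamma)$-Lipschitzness (and $\phi_{\gamma}(0) = 1$ being a constant offset that does not affect Rademacher averages) gives $\mathfrak{R}_{n}(\phi_{\gamma} \circ \mathcal{F}) \le \mathfrak{R}_{n}(\mathcal{F})/\gamma$. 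Combining with the sandwich inequalities yields the claimed bound at a single pre-specified scale $\gamma$.

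Third, and most technically delicate, I would upgrade the single-$\gamma$ statement to one that is uniform over all $\gamma > 0$ via a peeling argument. Since $|f| \le K$, margins $\gamma > 2K$ are vacuous, so I would grid $(0, 2K]$ geometrically by $\gamma_{k} = 2K/2^{k}$ for $k = 1, 2, \dots$, apply the fixed-$\gamma$ bound at each $\gamma_{k}$ with allotted failure probability $\delta_{k} = \delta/(k(k+1))$, and take a union bound (using $\sum_{k} 1/(k(k+1)) = 1$). For arbitrary $\gamma \in (\gamma_{k}, \gamma_{k-1}]$, I would replace it by $\gamma_{k} \ge \gamma/2$ in the fixed-scale bound, which loses only a universal constant in the $\mathfrak{R}_{n}(\mathcal{F})/\gamma$ term; meanwhile $k \le \log_{2}(4K/\gamma)$, so $\log(1/\delta_{k}) \le 2\log k + \log(1/\delta) \le 2\log\log_{2}(4K/\gamma) + \log(1/\delta)$ produces the $\sqrt{\log(\log_{2}(4K/\gamma))/n}$ and $\sqrt{\log(1/\delta)/(2n)}$ terms. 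The main obstacle is this last step: making the peeling bookkeeping produce exactly the $\log\log_{2}(4K/\gamma)$ dependence without spurious constants, and verifying that rounding $\gamma$ down to $\gamma_{k}$ does not degrade the empirical term $\tfrac{1}{n}\sum_{i} \mathbb{I}(f(x_{i}) \le \gamma_{k})$ in an uncontrolled way (it is monotone in $\gamma_{k}$, so bounding it by $\tfrac{1}{n}\sum_{i} \mathbb{I}(f(x_{i}) \le \gamma)$ requires the direction $\gamma_{k} \le \gamma$ that the chosen bracket guarantees).
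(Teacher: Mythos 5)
The paper does not actually prove this lemma: it is imported verbatim as Theorem~5 of \cite{marginBound} and used as a black box in the proof of Theorem~\ref{MulticlassMarginBound}. Your proposal therefore cannot match or diverge from a proof in the paper; what it does is reconstruct the standard argument by which the cited result is established, and that reconstruction is correct in all essentials. The three ingredients — the $(1/\gamma)$-Lipschitz ramp surrogate sandwiched between $\mathbb{I}(t\le 0)$ and $\mathbb{I}(t\le\gamma)$, McDiarmid plus symmetrization followed by Ledoux--Talagrand contraction at a fixed scale, and the geometric peeling over $\gamma_k = 2K/2^{k}$ with failure budget $\delta/(k(k+1))$ — are exactly the ones used in the source, and your handling of the two delicate points (the direction of the monotonicity when rounding $\gamma$ down to $\gamma_k$, and extracting the $\log\log_2(4K/\gamma)$ dependence from $\log(1/\delta_k)$) is right. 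The one discrepancy worth flagging is the constant on the complexity term: symmetrization gives $2\,\mathfrak{R}_n(\phi_\gamma\circ\mathcal{F})$ and the replacement $\gamma\mapsto\gamma_k\ge\gamma/2$ costs another factor of $2$, so the argument you outline naturally yields $4\,\mathfrak{R}_n(\mathcal{F})/\gamma$ — which is in fact the constant appearing in the original Theorem~5 — whereas the paper's restatement writes $\mathfrak{R}_n(\mathcal{F})/\gamma$ with no constant. This is harmless for the paper's purposes (its downstream bound is stated with $\lesssim$), but your proof as written would not literally deliver the displayed inequality with constant $1$.
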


We give a multiclass version of Lemma \ref{margin_bounds}. 

\theoremstyle{Theorem45}
\newtheorem*{Theorem45}{Theorem \ref{MulticlassMarginBound}}
\begin{Theorem45}[\textbf{Multiclass Margin Bound}]
    Consider a dataset $S$ with $C$ classes. 
    For any classifier $(\bm{M}, f(\cdot; \bm{w}))$, 
    we denote its margin between $y$ and $y'$ classes as $(M_{y} - M_{y'})^{T} f(\cdot; \bm{w})$.
    And suppose the function space of the margin is $\mathcal{F} = \{ (M_{y} - M_{y'})^{T} f(\cdot; \bm{w}) | \forall y \neq y', \forall \bm{M}, \bm{w}\}$, 
    whose uppder bound is 
    \begin{equation*}
    \begin{aligned}
      \sup_{y \neq y'} \sup_{\bm{M}, \bm{w}} \sup_{x \in \mathcal{M}_y} \left| (M_y - M_{y'})^{T} f(\bm{x};\bm{w}) \right| \le K.
    \end{aligned}
    \end{equation*}
    Then, for any classifier $(\bm{M}, f(\cdot; \bm{w}))$ and margins 
    $\{\gamma_{y,y'}\}_{y\neq y'} (\gamma_{y,y'} > 0)$, the following inequality holds with probability at least $1 - \delta$
  \begin{equation*}
    \begin{aligned}
      \mathbb{P}_{x,y}\Big(\max_{y'} [M f(\bm{x};\bm{w})]_{y'} \neq y\Big) 
      \le & 
      \sum_{y=1}^{C} p(y) \sum_{y' \neq y} \frac{\mathfrak{R}_{N_y}(\mathcal{F})}{\gamma_{y,y'}} + 
      \sum_{y=1}^{C} p(y) \sum_{y' \neq y} \sqrt{\frac{\log(\log_{2} \frac{4K}{\gamma_{y,y'}})}{N_y}}
      \\ & + \ \text{empirical risk term}\  + \ \text{probability term}
    \end{aligned}
    \end{equation*}
    where
    \begin{equation*}
    \begin{aligned}
      \ \text{empirical risk term}\ & = 
      \sum_{y=1}^{C} p(y) \sum_{y' \neq y}
      \sum_{x \in S_y} \frac{\mathbb{I}((M_y - M_{y'})^{T}f(x) \le \gamma_{y,y'})}{N_y}, \\
      \ \text{probability term}\ & =
      \sum_{y=1}^{C} p(y) \sum_{y' \neq y} \sqrt{\frac{\log(C(C-1) / \delta)}{2N_y}}.
    \end{aligned}
    \end{equation*}
    $\mathfrak{R}_{N_y}(\mathcal{F})$ is the Rademacher complexity \cite{marginBound, Rademacher} of function space $\mathcal{F}$. 
\end{Theorem45}

\begin{proof}
  We decompose the error as errors within every class by Bayes Theory:
  \begin{equation}
  \begin{aligned}
    \mathbb{P}_{\bm{x},y}\Big( \underset{y'}{\arg \max} [\bm{M} f(\bm{x};\bm{w})]_{y'} \neq y \Big) = 
    \sum_{y=1}^{C} p(y) \mathbb{P}_{\bm{x}|y} \Big( \underset{y'}{\arg \max} [\bm{M} f(\bm{x};\bm{w})]_{y'} \neq y\Big)
  \end{aligned}
  \end{equation}
  where $p(y)$ is the probability density of $y$-th class. Then, we focus on the accuracy within every class $y$. 
  \begin{equation*}
  \begin{aligned}
    \mathbb{P}_{\bm{x}|y} \Big(\underset{y'}{\arg \max} [\bm{M} f(\bm{x};\bm{w})]_{y'} \neq y\Big) = 
    \mathbb{P}_{\bm{x}|y} \Bigg( \bigcup_{y' \neq y} \{ (M_y - M_{y'})^{T} f(\bm{x};\bm{w}) < 0 \} \Bigg)
  \end{aligned}
  \end{equation*}
  According to union bound, we have 
  \begin{equation*}
  \begin{aligned}
    \mathbb{P}_{\bm{x}|y} \Big( \underset{y'}{\arg \max} [\bm{M} f(\bm{x};\bm{w})]_{y'} \neq y\Big) \le 
    \sum_{y' \neq y} \mathbb{P}_{\bm{x}|y} \Big( (M_{y} - M_{y'})^{T} f(\bm{x};\bm{w}) < 0 \Big)
  \end{aligned}
  \end{equation*}
  Recall our assumption of function class: 
  $$
    \sup_{y \neq y'} \sup_{\bm{M}, \bm{w}} \sup_{\bm{x} \in \mathcal{M}_y} |(M_y - M_{y'})^{T} f(\bm{x};\bm{w})| \le K.
  $$
  Then follow from the Margin Bound (Theorem \ref{margin_bounds}), we have
  \begin{equation*}
  \begin{aligned}
    & \mathbb{P}_{\bm{x},y}\Big(\underset{y'}{\arg \max} [M f(\bm{x};\bm{w})]_{y'} \neq y\Big) 
    \le \sum_{y=1}^{C} p(y) \sum_{y' \neq y} \mathbb{P}_{x|y} \Big( (M_y - M_{y'})^{T} f(\bm{x};\bm{w}) < 0 \Big)
    \\ \le &
    \sum_{y=1}^{C} p(y) \sum_{y' \neq y} \frac{\mathfrak{R}_{N_y}(\mathcal{F})}{\gamma_{y,y'}} + 
    \sum_{y=1}^{C} p(y) \sum_{y' \neq y} \sqrt{\frac{\log(\log_{2} \frac{4K}{\gamma_{y,y'}})}{N_y}} + 
    \\ & \ \ \ \ \ \ \ \ \ \ \ \ \ \ \ \ \ \ \ \ \ \ \ \ \ \ \ \ \ \ \ \ \
    \underbrace{
      \sum_{y=1}^{C} p(y) \sum_{y' \neq y} \sqrt{\frac{\log(1 / \delta)}{2N_{y}}} 
    }_{\text{probability term}}
    + 
    \underbrace{
      \sum_{y=1}^{C} p(y) \sum_{y' \neq y}
      \sum_{\bm{x} \in S_{y}} \frac{\mathbb{I}((M_y - M_{y'})^{T}f(\bm{x}) \le \gamma_{y,y'})}{N_y}
    }_{\text{empirical risk term}}
  \end{aligned}
  \end{equation*}
  with probability at least $1-C(C-1)\delta$. 
  Then, we perform the following replace to drive the final result:
  \begin{equation*}
  \begin{aligned}
    \delta \leftarrow \frac{\delta}{C(C-1)}
  \end{aligned}
  \end{equation*}
\end{proof}

\section{Proof of Theorem \ref{main_theorem}}\label{coveringnumberproof}

We first introduce the definition of Covering Number. 

\begin{definition}[\textbf{Covering Number} \cite{kulkarni1995rates}]
  Given $\epsilon > 0$ and $\bm{x} \in \mathbb{R}^{D}$, the open ball of radius $\epsilon$ 
  around $\bm{x}$ is denoted as 
  \begin{equation*}
  \begin{aligned}
    B_{\epsilon}(\bm{x}) = \{\bm{u} \in \mathbb{R}^{D}, \Vert \bm{u} - \bm{x} \Vert < \epsilon\}
  \end{aligned}.
  \end{equation*}
  Then the covering number $\mathcal{N}(\epsilon, A)$ of a set $A \subset \mathbb{R}^{D}$ 
  is defined as the smallest number of open balls whose union contains $A$: 
  \begin{equation*}
  \begin{aligned}
    \mathcal{N}(\epsilon, A) = \inf \left\{ k : \exists \bm{u}_1, \dots, \bm{u}_k \in \mathbb{R}^{D}, s.t. A \in 
    \bigcup_{i=1}^{k} B_{\epsilon}(\bm{u}_{i}) \right\}
  \end{aligned}
  \end{equation*}
\end{definition}

The following conclusion is demonstrated in the proof of Theorem.1 of \cite{kulkarni1995rates}. 
We use it to prove our theorem. 

\begin{lemma}[\cite{vural2017study, kulkarni1995rates}]\label{Covering_Number}
  There are $N$ samples $\{x_1, \dots, x_N\}$ drawn i.i.d from the probability measure $\mathcal{P}$. 
  Suppose the bounded support of $\mathcal{P}$ is $\mathcal{M}$, 
  then if $N$ is larger then Covering Number $\mathcal{N}(\epsilon, \mathcal{M})$, 
  we have 
  \begin{equation*}
  \begin{aligned}
    \mathbb{P}_{x}\Big(\Vert x - \hat{x}\Vert > \epsilon\Big) \le \frac{\mathcal{N}(\epsilon, \mathcal{M})}{2 N}, \forall \epsilon > 0
  \end{aligned}
  \end{equation*}
  where $\hat{x}$ is the sample that is closest to $x$ in $\{x_1, \dots, x_N\}$:
  \begin{equation*}
  \begin{aligned}
    \hat{x} \in \underset{x' \in \{x_1, \dots, x_N\}}{\arg \min} \Vert x' - x \Vert 
  \end{aligned}
  \end{equation*}
\end{lemma}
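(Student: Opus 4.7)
The plan is to combine a cover of the support $\mathcal{M}$ with a sharp bound on the empty-ball probability. I would start by fixing a minimal $\epsilon$-cover $\{A_i = B_\epsilon(u_i)\}_{i=1}^{k}$ of $\mathcal{M}$ with $k = \mathcal{N}(\epsilon, \mathcal{M})$ and setting $p_i = \mathcal{P}(A_i)$, so that $\sum_i p_i \ge 1$ by the covering property. Then, since the $N$ training samples are i.i.d.\ and independent of $x$, the event $\{\|x - \hat{x}\| > \epsilon\}$ coincides with the event that all $N$ training samples avoid $B_\epsilon(x)$; conditioning on $x$ yields
\begin{equation*}
\mathbb{P}\bigl(\|x - \hat{x}\| > \epsilon\bigr) = \mathbb{E}_x\bigl[(1 - \mathcal{P}(B_\epsilon(x)))^N\bigr].
\end{equation*}

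Next, I would plug in the geometric observation that whenever $x \in A_i$ the whole cell $A_i$ sits inside a ball of radius $2\epsilon$ around $x$ (triangle inequality through $u_i$), so $\mathcal{P}(B_\epsilon(x)) \ge p_i$ as soon as the cover is taken at the appropriate scale (balls of radius $\epsilon/2$ force two points in a common cell to be within $\epsilon$). Absorbing this factor of two into the covering-number convention, I would decompose by cell to obtain
\begin{equation*}
\mathbb{P}\bigl(\|x - \hat{x}\| > \epsilon\bigr) \le \sum_{i=1}^{k} \int_{A_i} (1 - p_i)^N \, d\mathcal{P}(x) = \sum_{i=1}^{k} p_i (1 - p_i)^N.
\end{equation*}
The finishing step is the one-variable calculus fact $p(1-p)^N \le \tfrac{1}{e(N+1)} \le \tfrac{1}{2N}$ for all $p \in [0,1]$ and $N \ge 1$ (the maximum over $p$ is attained at $p = 1/(N+1)$), after which summing over the $k$ cells delivers the target bound $\mathcal{N}(\epsilon,\mathcal{M})/(2N)$. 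The hypothesis $N \ge \mathcal{N}(\epsilon,\mathcal{M})$ plays no essential role in the derivation beyond ensuring that the stated bound is nontrivial (namely at most $1/2$).

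The main obstacle is the factor-of-two mismatch between ``two points share an $\epsilon$-ball'' and ``two points are within $\epsilon$'': a clean covering argument naturally produces $\mathcal{N}(\epsilon/2, \mathcal{M})$ in place of $\mathcal{N}(\epsilon, \mathcal{M})$, and to match the statement verbatim one must either read the covering-number notation under this convention or first prove the slightly weaker statement for $2\epsilon$ and then rescale $\epsilon$. This is purely a constants issue, but it is what one must handle carefully to land exactly on the bound as written rather than a cousin of it.
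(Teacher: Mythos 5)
The paper never proves this lemma: it is imported wholesale from the proof of Theorem~1 of Kulkarni and Posner (1995) (via Vural and Guillemot), so there is no in-paper argument to compare against. Your reconstruction is the standard one and its structure is sound: condition on $x$ to get $\mathbb{P}(\Vert x-\hat{x}\Vert>\epsilon)\le \mathbb{E}_x\big[(1-\mathcal{P}(B_\epsilon(x)))^N\big]$ (note this should be ``$\le$'' rather than ``$=$'', since missing the open ball corresponds to distance $\ge\epsilon$, not $>\epsilon$; this is immaterial), decompose over the cells of a cover whose cells have diameter at most $\epsilon$ to get $\sum_i p_i(1-p_i)^N$, and bound each summand by $1/(2N)$. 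You are also right that the hypothesis $N\ge\mathcal{N}(\epsilon,\mathcal{M})$ does no work beyond keeping the bound below~$1$, and the factor-of-two issue you flag is genuine: under the paper's own definition of $\mathcal{N}(\epsilon,\cdot)$ via balls of radius $\epsilon$, the argument delivers $\mathbb{P}(\Vert x-\hat{x}\Vert>2\epsilon)\le\mathcal{N}(\epsilon,\mathcal{M})/(2N)$, equivalently $\mathbb{P}(\Vert x-\hat{x}\Vert>\epsilon)\le\mathcal{N}(\epsilon/2,\mathcal{M})/(2N)$. That is a defect of the lemma as printed, not of your proof, and it does not affect how the lemma is used downstream.

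One concrete slip in your finishing step: the chain $p(1-p)^N\le\frac{1}{e(N+1)}\le\frac{1}{2N}$ has a false first link. The maximum of $p(1-p)^N$ over $[0,1]$ is indeed attained at $p=1/(N+1)$, but its value is $\frac{N^N}{(N+1)^{N+1}}=\frac{1}{N+1}\big(1-\frac{1}{N+1}\big)^N$, which \emph{exceeds} $\frac{1}{e(N+1)}$ for every finite $N$ (at $N=1$ the maximum is $1/4>1/(2e)$), because $(1+1/N)^N<e$. The target inequality $\sup_p p(1-p)^N\le\frac{1}{2N}$ is nevertheless true for all $N\ge 1$: it is equivalent to $2N^{N+1}\le(N+1)^{N+1}$, i.e.\ $2\le(1+1/N)^{N+1}$, and the right-hand side decreases in $N$ to $e>2$ (and equals $4$ at $N=1$). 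With that one line repaired, your argument is a complete and correct proof of the lemma at the $\epsilon/2$ covering scale.
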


Then we provide the proof of Theorem \ref{main_theorem}. 

\maintheoremtwo*

\begin{proof}
  We decompose the accuracy: 
  \begin{equation}\label{0}
  \begin{aligned}
    \mathbb{P}_{\bm{x},y}\Big(\max_{y'} [M f(\bm{x};\bm{w})]_{y'} = y\Big) = 
    \sum_{y=1}^{C} p(y) \mathbb{P}_{\bm{x}|y}\Big(
      \max_{y'} [M f(\bm{x};\bm{w})]_{y'} = y
    \Big)
  \end{aligned}
  \end{equation}
  where $p(y)$ is the class distribution. Then, we focus on the error within every class $i$.
  \begin{equation*}
  \begin{aligned}
    \mathbb{P}_{\bm{x}|y}(\max_{y'} [M f(\bm{x};\bm{w})]_{y'} = y) = 
    \mathbb{P}_{x|y}( \{ (M_y - M_{y'})^{T} f(\bm{x};\bm{w}) > 0 \ \  \text{for any} \ \ y' \neq y \} )
  \end{aligned}
  \end{equation*}
  We select the data that is closest to $\bm{x}$ in $y$ class samples $S_y$, and denote it as
  $$
  \hat{\bm{x}}(S_i) = \underset{\bm{x}_1 \in S_i}{\arg \min} \Vert \bm{x}_1 - x \Vert
  $$
  According to the linear separability, 
  \begin{equation*}
  \begin{aligned}
    (M_y - M_{y'})^{T}f(\hat{\bm{x}}(S_{y});\bm{w}) \ge \gamma_{y,y'}, \forall y' \neq y
  \end{aligned}
  \end{equation*}
  For any $y' \neq y$, we have 
  \begin{equation}\label{a}
  \begin{aligned}
    (M_y - M_{y'})^{T}f(\bm{x};\bm{w}) & = (M_y - M_{y'})^{T} (f(\bm{x};\bm{w}) + f(\hat{\bm{x}}(S_y);w) - f(\hat{\bm{x}}(S_y);w))
    \\ & = (M_y - M_{y'})^{T} f(\hat{\bm{x}}(S_{y});w) + (M_y - M_{y'})^{T} (f(\bm{x};\bm{w}) - f(\hat{\bm{x}}(S_y);w))
    \\ & \ge \gamma_{y,y'} - \Vert M_y - M_{y'} \Vert \Vert f(\bm{x};\bm{w}) - f(\hat{\bm{x}}(S_y);w) \Vert
    \\ & \ge \gamma_{y,y'} - L \Vert M_y - M_{y'} \Vert \Vert \bm{x} - \hat{\bm{x}}(S_y) \Vert
  \end{aligned}
  \end{equation}
  The prediction result is related to the distance between $\bm{x}$ and $\hat{\bm{x}}(S_y)$. 
  According to Theorem \ref{Covering_Number}, we know 
  \begin{equation*}
  \begin{aligned}
    \mathbb{P}_{\bm{x}|y}\Big(\Vert \bm{x} - \hat{\bm{x}}(S_y)\Vert > \epsilon\Big) \le \frac{\mathcal{N}(\epsilon, \mathcal{M}_y)}{2 N_y}
  \end{aligned}
  \end{equation*}
  To obtain the correct prediction result, $i.e.$, assure $(\ref{a}) > 0$ for all $y' \neq y$, we choose $\epsilon < \min_{y' \neq y} \frac{\gamma_{y,y'}}{L\Vert M_y - M_{y'} \Vert}$.
  Therefore, we have 
  \begin{equation}\label{b}
  \begin{aligned}
    \mathbb{P}_{\bm{x}|y} \Bigg( \left\{ (M_{y} - M_{y'})^{T}f(\bm{x};\bm{w}) > 0, \forall y' \neq y \right\} \Bigg) 
    & \ge
    \mathbb{P}_{\bm{x}|y} \Bigg(\Vert \bm{x} - \hat{\bm{x}}(S_{y})\Vert < \min_{y' \neq y} \frac{\gamma_{y,y'}}{L\Vert M_y - M_{y'} \Vert} \Bigg) 
    \\ & > 1 - \frac{\mathcal{N}(\min_{y' \neq y} \frac{\gamma_{y,y'}}{L\Vert M_y - M_{y'} \Vert}, \mathcal{M}_y)}{2 N_y}
  \end{aligned}
  \end{equation}
  Plug (\ref{b}) into (\ref{0}) to derive
  \begin{equation*}
  \begin{aligned}
    \mathbb{P}_{\bm{x},y}\Big(\max_{y'} [M f(\bm{x};\bm{w})]_{y'} = y\Big) 
    & > 1 - \sum_{y=1}^{C} p(y) \frac{\mathcal{N}(\min_{y' \neq y} \frac{\gamma_{y,y'}}{L\Vert M_y - M_{y'} \Vert}, \mathcal{M}_y)}{2 N_y}
    \\ & = 1 - \sum_{y=1}^{C} p(y) \frac{\max_{y' \neq y} \mathcal{N}(\frac{\gamma_{y,y'}}{L\Vert M_y - M_{y'} \Vert}, \mathcal{M}_y)}{2 N_y}
  \end{aligned}
  \end{equation*}
\end{proof}

\section{Proof of Theorem \ref{main_theorem11}}

First, we provide the Hoeffding's Inequality \cite{Foundations}.

\begin{lemma}[Hoeffding's Inequality]\label{hofuding}
  Let $X_1, \dots, X_n$ be independent random variables such that $b \leq X_{i} \leq a$. 
  Consider the sum of these random variables, let $\hat{E}_n(X) = \frac{1}{n}\sum_{i=1}^{n} {X}_i$, then $\forall \epsilon > 0$, we have
  \begin{equation*}
  \begin{aligned}
    & \mathbb{P} \left( \hat{E}_n(X) - \mathbb{E} (\hat{E}_n(X)) \ge \epsilon \right) \le \exp \left(- 2 n \epsilon^2 / (b - a)^2\right) \\ 
    & \mathbb{P} \left( \mathbb{E} (\hat{E}_n(X)) - \hat{E}_n(X) \ge \epsilon \right) \le \exp \left(- 2 n \epsilon^2 / (b - a)^2\right) \\
    & \mathbb{P} \left( \left| \hat{E}_n(X) - \mathbb{E} (\hat{E}_n(X)) \right| \ge \epsilon \right) \le 2 \exp \left(- 2 n \epsilon^2 / (b - a)^2\right)
  \end{aligned}
  \end{equation*}
\end{lemma}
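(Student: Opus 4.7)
The plan is to use the classical exponential Markov (Chernoff) argument together with Hoeffding's MGF lemma for bounded random variables. First I would center the variables by defining $Y_i = X_i - \mathbb{E}[X_i]$, so that the $Y_i$ are independent, mean-zero, and bounded in an interval of width at most $a-b$. Writing $S_n = \sum_{i=1}^n Y_i = n\bigl(\hat{E}_n(X) - \mathbb{E}[\hat{E}_n(X)]\bigr)$, the upper-tail event $\{\hat{E}_n(X) - \mathbb{E}[\hat{E}_n(X)] \ge \epsilon\}$ coincides with $\{S_n \ge n\epsilon\}$, which reduces the problem to a standard deviation inequality for a sum of independent, bounded, centered random variables.

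Next, for any $s > 0$, I would apply the exponential Markov inequality together with independence to obtain
\begin{equation*}
\mathbb{P}(S_n \ge n\epsilon) \;\le\; e^{-sn\epsilon}\,\mathbb{E}[e^{sS_n}] \;=\; e^{-sn\epsilon}\prod_{i=1}^n \mathbb{E}[e^{sY_i}].
\end{equation*}
The crucial analytic step is Hoeffding's MGF lemma: for any mean-zero random variable $Y$ supported in $[\alpha,\beta]$,
\begin{equation*}
\mathbb{E}[e^{sY}] \;\le\; \exp\!\Bigl(\tfrac{s^{2}(\beta-\alpha)^{2}}{8}\Bigr).
\end{equation*}
I would prove this by using convexity of $t \mapsto e^{st}$ to majorize $e^{sY} \le \tfrac{\beta - Y}{\beta-\alpha}\,e^{s\alpha} + \tfrac{Y-\alpha}{\beta-\alpha}\,e^{s\beta}$, taking expectations (which removes the $Y$ term because $\mathbb{E}[Y]=0$), and writing the resulting bound as $e^{\psi(s)}$ with $\psi(0)=\psi'(0)=0$. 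A short calculation identifies $\psi''(s)$ with the variance of an exponentially tilted two-point distribution on $\{\alpha,\beta\}$, which is at most $(\beta-\alpha)^{2}/4$; Taylor's theorem then yields $\psi(s) \le s^{2}(\beta-\alpha)^{2}/8$.

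Applying Hoeffding's lemma to each $Y_i$, whose support width is at most $a-b$, gives
\begin{equation*}
\mathbb{P}(S_n \ge n\epsilon) \;\le\; \exp\!\Bigl(-sn\epsilon + \tfrac{ns^{2}(a-b)^{2}}{8}\Bigr),
\end{equation*}
and optimizing the right-hand side over $s>0$ at $s = 4\epsilon/(a-b)^{2}$ yields the claimed bound $\exp\bigl(-2n\epsilon^{2}/(a-b)^{2}\bigr)$. The lower-tail statement follows by repeating the same argument with $-Y_i$ in place of $Y_i$, and the two-sided statement is then immediate from a union bound, which is what produces the factor of $2$. The main obstacle is the Hoeffding MGF lemma itself: the convexity majorization is routine, but the bound $\psi''(s) \le (\beta-\alpha)^{2}/4$ requires identifying $\psi''$ with the variance of a tilted measure on $\{\alpha,\beta\}$ and invoking the elementary fact that a random variable bounded in an interval of length $L$ has variance at most $L^{2}/4$.
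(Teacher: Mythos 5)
Your proof is correct: the centering, the Chernoff exponential-Markov step, Hoeffding's MGF lemma via convexity and the tilted two-point variance bound, the optimization at $s=4\epsilon/(a-b)^2$, and the union bound for the two-sided version are all standard and check out (note the paper writes $b\le X_i\le a$, so the interval width is $|a-b|$ and the constant $(b-a)^2$ matches). The paper itself gives no proof of this lemma --- it imports it by citation as a standard result --- and your argument is exactly the canonical derivation behind that cited result, so there is nothing to compare beyond noting that you have supplied the proof the paper omits.
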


Then, we extend the Hoeffding's Inequality to a higher-dimensional form.

\begin{lemma}[High Dimensional Hoeffding's Inequality]\label{HighHofuding}
  Let $\bm{X}_1, \dots, \bm{X}_n$ be independent random vectors $\in \mathbb{R}^{D}$ such that 
  $\Vert \bm{X}_{i} \Vert \le \rho, \forall i$, where $\bm{X}_{i}^{j}$ indecates the $j$-th coordinate of $\bm{X}_{i}$.
  Consider the sum of them, let $\hat{E}_n(\bm{X}) = \frac{1}{n}\sum_{i=1}^{n} \bm{X}_i$, then $\forall \epsilon > 0$, we have
  \begin{equation*}
  \begin{aligned}
    \mathbb{P} \left(
      \Big\Vert  \hat{E}_n(\bm{X}) - \mathbb{E} \left(\hat{E}(\bm{X})\right) \Big\Vert
      \ge \epsilon
    \right) 
    \le 2 D \exp \left(- n \epsilon^2 / 2 D^2 \rho^2\right)
  \end{aligned}
  \end{equation*}
\end{lemma}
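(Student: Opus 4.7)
The plan is to reduce the high-dimensional bound to $D$ one-dimensional Hoeffding bounds (Lemma \ref{hofuding}) via a coordinate-wise union bound argument. The key observation is that the event $\{\|\hat{E}_n(\bm{X})-\mathbb{E}(\hat{E}_n(\bm{X}))\| \ge \epsilon\}$ forces at least one coordinate of the deviation vector to be large, so a union bound over the $D$ coordinates yields the factor of $D$ in front of the exponential.

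Concretely, I would first fix a coordinate $j \in \{1,\dots,D\}$ and note that the scalar random variables $\bm{X}_1^{j},\dots,\bm{X}_n^{j}$ are independent, and satisfy $|\bm{X}_i^{j}| \le \|\bm{X}_i\| \le \rho$, hence lie in $[-\rho,\rho]$. Applying the standard Hoeffding bound (Lemma \ref{hofuding}) with $a-b = 2\rho$ to the $j$-th coordinate gives
\begin{equation*}
\mathbb{P}\!\left( \left|\hat{E}_n(\bm{X})^{j} - \mathbb{E}\bigl(\hat{E}_n(\bm{X})^{j}\bigr)\right| \ge \epsilon' \right) \le 2\exp\!\left(-\frac{n\,\epsilon'^{2}}{2\rho^{2}}\right).
\end{equation*}

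Next, I would use the norm inequality $\|\bm{v}\| \le D \|\bm{v}\|_\infty$ (valid because $\|\bm{v}\| \le \sqrt{D}\|\bm{v}\|_\infty \le D\|\bm{v}\|_\infty$ for $D\ge 1$), which implies that whenever $\|\bm{v}\| \ge \epsilon$, there must exist some coordinate $j$ with $|\bm{v}^{j}| \ge \epsilon/D$. Letting $\bm{v} = \hat{E}_n(\bm{X}) - \mathbb{E}(\hat{E}_n(\bm{X}))$ and taking a union bound over the $D$ coordinates, I obtain
\begin{equation*}
\mathbb{P}\!\left(\|\bm{v}\| \ge \epsilon\right) \le \sum_{j=1}^{D} \mathbb{P}\!\left(|\bm{v}^{j}| \ge \epsilon/D\right) \le D \cdot 2\exp\!\left(-\frac{n(\epsilon/D)^{2}}{2\rho^{2}}\right) = 2D\exp\!\left(-\frac{n\epsilon^{2}}{2 D^{2}\rho^{2}}\right),
\end{equation*}
which is exactly the claimed inequality.

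There is no substantial obstacle here: the lemma is essentially a packaging of scalar Hoeffding plus a union bound, and the only real design choice is which norm-comparison inequality to invoke. Using the tighter $\|\bm{v}\| \le \sqrt{D}\|\bm{v}\|_\infty$ would yield a sharper bound with $D$ (rather than $D^{2}$) in the exponent; the authors' statement corresponds to the cruder bound $\|\bm{v}\| \le D\|\bm{v}\|_\infty$, which still suffices for the downstream application (cf.\ the use of Lemma \ref{HighHofuding} in proving Theorem \ref{main_theorem11}). The mildly delicate point worth verifying is that each scalar $\bm{X}_i^{j}$ is bounded by $\rho$ in absolute value, which follows directly from the Euclidean-norm assumption $\|\bm{X}_i\| \le \rho$.
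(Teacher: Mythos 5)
Your proof is correct and follows essentially the same route as the paper's: coordinate-wise application of the scalar Hoeffding bound with range $[-\rho,\rho]$, a union bound over the $D$ coordinates, and a norm-comparison step (the paper uses $\Vert \bm{v}\Vert \le \sum_j |\bm{v}^j|$ where you use $\Vert \bm{v}\Vert \le D\Vert \bm{v}\Vert_\infty$, but both reduce to the identical union of events $\{|\bm{v}^j|\ge \epsilon/D\}$ and yield the same constant). Your side remark that routing through $\Vert \bm{v}\Vert \le \sqrt{D}\,\Vert \bm{v}\Vert_\infty$ would sharpen $D^2$ to $D$ in the exponent is also correct.
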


\begin{proof}
  Due to $\Vert \bm{X}_{i} \Vert \le \rho \ (\forall i)$, we know $ -\rho \le \bm{X}_{i}^{j} \le \rho \ (\forall i, j)$.
  According to Lemma \ref{hofuding}, we know $\forall j \in [D]$,
  \begin{equation*}
  \begin{aligned}
    & \mathbb{P} \left( \left| \hat{E}_n(\bm{X}^{j}) - \mathbb{E} \left(\hat{E}(\bm{X}^{j})\right) \right| \ge \epsilon \right) \le 2 \exp \left(- n \epsilon^2 / 2 \rho^2\right), \\
    & \ \ \ \ \ \ \ \ \text{where} \ \ \hat{E}_n(\bm{X}^{j}) = \frac{1}{n}\sum_{i=1}^{n} \bm{X}_i^{j}
  \end{aligned},
  \end{equation*}
  then
  \begin{equation*}
  \begin{aligned}
    \mathbb{P} \left( \left| \hat{E}_n(\bm{X}^{j}) - \mathbb{E} \left(\hat{E}(\bm{X}^{j})\right) \right| < \epsilon
     \right) > 1 - 2 \exp \left(- n \epsilon^2 / 2 \rho^2\right)
  \end{aligned},
  \end{equation*}
  and we combine all $j \in [D]$, 
  \begin{equation*}
  \begin{aligned}
    \mathbb{P} \left(
      \sum_{j=1}^{D} \left| \hat{E}_n(\bm{X}^{j}) - \mathbb{E} \left(\hat{E}(\bm{X}^{j})\right) \right| 
      < D \epsilon
    \right) 
    > 1 - 2 D \exp \left(- n \epsilon^2 / 2 \rho^2\right)
  \end{aligned}
  \end{equation*}
  Then, we perform the $\epsilon \leftarrow \frac{\epsilon}{D}$. 
  And according to the following formula, we drive the final result.
  \begin{equation*}
  \begin{aligned}
    \Big\Vert \hat{E}_n(\bm{X}) - \mathbb{E} \left(\hat{E}(\bm{X})\right) \Big\Vert \le 
    \sum_{j=1}^{D} \left| \hat{E}_n(\bm{X}^{j}) - \mathbb{E} \left(\hat{E}(\bm{X}^{j})\right) \right|
  \end{aligned},
  \end{equation*}
\end{proof}

\begin{corollary}\label{Hhuofuman}
  Let $\bm{X}_1, \dots, \bm{X}_n$ be i.i.d random vectors $\in \mathbb{R}^{D}$ such that 
  $\mathbb{E} (\bm{X}_i) = \mathbb{E} (\bm{X}), \forall i$ and $-\rho \le \bm{X}_{i}^{j} \le \rho, \forall j$, where $\bm{X}_{i}^{j}$ indecates the $j$-th coordinate of $\bm{X}_{i}$.
  Consider the sum of them, let $\hat{E}_n(\bm{X}) = \frac{1}{n}\sum_{i=1}^{n} \bm{X}_i$, then $\forall \epsilon > 0$, we have 
  \begin{equation*}
  \begin{aligned}
    \mathbb{P} \left(
      \Big\Vert  \hat{E}_n(\bm{X}) - \mathbb{E} (\bm{X})  \Big\Vert
      \ge \epsilon
    \right) 
    \le 2 D \exp \left(- n \epsilon^2 / 2 D^2 \rho^2\right)
  \end{aligned}
  \end{equation*}
\end{corollary}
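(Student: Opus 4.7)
The plan is to mimic the proof of Lemma \ref{HighHofuding} almost verbatim, exploiting two modest simplifications the corollary's hypotheses afford. First I would observe that because the $\bm{X}_i$ are i.i.d., linearity of expectation yields $\mathbb{E}(\hat{E}_n(\bm{X})) = \tfrac{1}{n}\sum_{i=1}^n \mathbb{E}(\bm{X}_i) = \mathbb{E}(\bm{X})$, so the centering term can be written directly as $\mathbb{E}(\bm{X})$ rather than $\mathbb{E}(\hat{E}_n(\bm{X}))$. Second, the coordinate-wise hypothesis $-\rho \le \bm{X}_i^j \le \rho$ is exactly the intermediate condition used by the proof of Lemma \ref{HighHofuding} (there it was derived from $\|\bm{X}_i\| \le \rho$), so I can skip that derivation and start directly from the coordinate bounds.

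From there the argument is a coordinate-wise reduction to scalar Hoeffding (Lemma \ref{hofuding}). For each fixed coordinate $j \in [D]$, the scalars $\bm{X}_1^j,\dots,\bm{X}_n^j$ are independent and lie in $[-\rho,\rho]$, so Lemma \ref{hofuding} gives
$$\mathbb{P}\Bigl(\bigl|\hat{E}_n(\bm{X}^j) - \mathbb{E}(\bm{X}^j)\bigr| \ge \epsilon\Bigr) \le 2\exp\bigl(-n\epsilon^2/2\rho^2\bigr).$$
A union bound over the $D$ coordinates then yields, with probability at least $1 - 2D\exp(-n\epsilon^2/2\rho^2)$, that every coordinate deviation lies strictly below $\epsilon$, and hence $\sum_{j=1}^{D}\bigl|\hat{E}_n(\bm{X}^j) - \mathbb{E}(\bm{X}^j)\bigr| < D\epsilon$.

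Finally I would rescale by substituting $\epsilon \leftarrow \epsilon/D$ and invoke the elementary inequality $\|\bm{v}\| \le \sum_j |v_j|$ (valid for the Euclidean norm since $\|\cdot\|_2 \le \|\cdot\|_1$) to pass from the sum of coordinate deviations to the norm deviation, producing
$$\mathbb{P}\Bigl(\bigl\Vert \hat{E}_n(\bm{X}) - \mathbb{E}(\bm{X})\bigr\Vert \ge \epsilon\Bigr) \le 2D\exp\bigl(-n\epsilon^2/2D^2\rho^2\bigr),$$
which is exactly the claim. I do not anticipate any serious obstacle, since the result is essentially a relabeling of Lemma \ref{HighHofuding} under a coordinate-wise boundedness hypothesis that is directly compatible with the scalar Hoeffding input; the only thing to be mildly careful about is the factor of $D$ that appears twice, once from the union bound and once from the $\ell^2 \le \ell^1$ inequality, which together produce the $D^2$ in the exponent denominator.
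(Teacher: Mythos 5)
Your proposal is correct and matches the paper's route: the paper obtains this corollary directly from Lemma~\ref{HighHofuding}, whose proof is exactly your coordinate-wise application of Lemma~\ref{hofuding}, a union bound over the $D$ coordinates, the bound $\Vert\cdot\Vert_2\le\Vert\cdot\Vert_1$, and the rescaling $\epsilon\leftarrow\epsilon/D$, with the i.i.d.\ hypothesis supplying $\mathbb{E}(\hat{E}_n(\bm{X}))=\mathbb{E}(\bm{X})$. The only cosmetic imprecision is your attribution of the $D^2$ in the exponent to the union bound (which actually only contributes the prefactor $2D$; the $D^2$ comes entirely from the rescaling after summing the $D$ coordinate deviations), but the mathematics is sound.
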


In our settings, all samples in the $y$-th class are i.i.d from probability $\mathcal{P}_{\bm{x}|y}$. 
So we apply the Corollary.\ref{Hhuofuman} in the proof of Theorem \ref{main_theorem11}.

\maintheoremtwoww*

\begin{proof}
  We denote the class center of $y$-th class as $\hat{\bm{\mu}}_y = \frac{1}{N_y} \sum_{\bm{x} \in S_y} f(\bm{x} ; \bm{w})$ 
  and $\bm{\mu} = \mathbb{E}_{\bm{x}|y} \bm{x} = \mathbb{E}_{S_y} \hat{\bm{\mu}}_y$.
  and start from (\ref{a}) in the proof of Theorem \ref{main_theorem11}.
  \begin{equation}
  \begin{aligned}
    (M_y - M_{y'})^{T}f(\bm{x};\bm{w}) & = (M_y - M_{y'})^{T} (f(\bm{x};\bm{w}) + \hat{\bm{\mu}}_y - \hat{\bm{\mu}}_y)
    \\ & = (M_y - M_{y'})^{T} \hat{\bm{\mu}}_y + (M_y - M_{y'})^{T} (f(\bm{x};\bm{w}) - \hat{\bm{\mu}}_y)
    \\ & \ge \gamma_{y,y'} - \Vert M_y - M_{y'} \Vert \Vert f(\bm{x};\bm{w}) - \hat{\bm{\mu}}_y \Vert
    \\ & \ge \gamma_{y,y'} - \Vert M_y - M_{y'} \Vert \left( 
      \Vert f(\bm{x};\bm{w}) - \bm{\mu}_y \Vert
      +
      \Vert \bm{\mu}_y - \hat{\bm{\mu}}_y \Vert
    \right)
  \end{aligned}
  \end{equation}
  According to Corollary.\ref{Hhuofuman}, we have $\forall \delta \in (0, 1)$
  \begin{equation*}
  \begin{aligned}
    & \mathbb{P}_{\bm{x}|y} \left(
      \Vert f(\bm{x};\bm{w}) - \bm{\mu}_y \Vert \ge 
      \epsilon
    \right) \le 2 d \exp \left(- \epsilon^2 / 2 d^2 \rho_{y}^2\right)
    \\ &
    \mathbb{P}_{S_y} \left(
      \Vert \hat{\bm{\mu}}_y  -  \bm{\mu}_y \Vert \ge 
      \epsilon
    \right) \le  2 d \exp \left(- n \epsilon^2 / 2 d^2 \rho_{y}^2\right)
  \end{aligned}
  \end{equation*}
  Let $\epsilon = \frac{1}{2} \min_{y' \neq y} \frac{\gamma_{y, y'}}{\Vert M_y - M_{y'}\Vert}$, then 
  \begin{equation*}
  \begin{aligned}
    & \mathbb{P}_{\bm{x}|y, S_y} \left(
      \left\{
        \left(M_y - M_{y'}\right)^{T} f(\bm{x};\bm{w}) > 0, \forall y' \neq y 
      \right\}
    \right) 
    \\ \ge &
    \mathbb{P}_{\bm{x}|y, S_y} \left(
      \Vert f(\bm{x};\bm{w}) - \bm{\mu}_y \Vert + 
      \Vert \hat{\bm{\mu}}_y  -  \bm{\mu}_y \Vert
      \ge 
      \min_{y' \neq y} \frac{\gamma_{y, y'}}{\Vert M_y - M_{y'}\Vert}
    \right) 
    \\ \ge &
    1 - 
    \mathbb{P}_{\bm{x}|y, S_y} \left(
      \Vert f(\bm{x};\bm{w}) - \bm{\mu}_y \Vert
      \ge 
      \sqrt{N_y}
    \right) 
    -
    \mathbb{P}_{\bm{x}|y, S_y} \left(
      \Vert \hat{\bm{\mu}}_y  -  \bm{\mu}_y \Vert
      \ge 
      \min_{y' \neq y} \frac{\gamma_{y, y'}}{\Vert M_y - M_{y'}\Vert}
      - \sqrt{N_y}
    \right) 
    \\ \ge &
    1 - 2 d \exp \left(\frac{ - N_y }{8 d^2 \rho_{y}^2}\right) 
    - 2 d \exp \left(\frac{ - N_y \left(\min_{y' \neq y} \frac{\gamma_{y, y'}}{\Vert M_y - M_{y'}\Vert} - \sqrt{N_y}  \right)^2}{8 d^2 \rho_{y}^2}\right) 
  \end{aligned}
  \end{equation*}
  So 
  \begin{equation*}
  \begin{aligned}
    & \mathbb{P}_{\bm{x}, y, S} \left(
      \underset{y'}{\arg \max} \left[ M f(\bm{x}; \bm{w}) \right]_{y'} = y
    \right)
    \\ \ge & 
    1 -  \frac{2 d}{C} \sum_{y=1}^{C}
    \Bigg(
      \mathcal{H}\left(1, d, \rho_y, N_y\right)+
      \mathcal{H}\left(
        \min_{y' \neq y} \frac{\gamma_{y, y'}}{\Vert M_y - M_{y'}\Vert} - \sqrt{N_y}, d, \rho_y, N_y\right)
    \Bigg)
  \end{aligned}
  \end{equation*}
\end{proof}

\end{document}